\documentclass[11pt]{article}
\usepackage[utf8]{inputenc}
\usepackage{libertine,euler}
\usepackage{verbatim}
\usepackage[T1]{fontenc}
\usepackage{hyperref,authblk,amsmath,amsfonts,amssymb}
\usepackage{nameref,amsthm,mathtools}
\usepackage{tikz}
\usepackage{multirow}
\usetikzlibrary{shapes.geometric, arrows.meta, positioning}
\newtheorem{lemma}{Lemma} 
\newtheorem{theorem}{Theorem} 
\usepackage[margin=1in]{geometry}
\title{Interpretable Classification of Time Series Using Euler Characteristic Surfaces}

\author[1]{Salam Rabindrajit Luwang}
\author[2,a]{Sushovan Majhi}
\author[1]{Vishal Mandal}
\author[3,b]{Atish J. Mitra}
\author[1,c]{Md. Nurujjaman}
\author[1,d]{Buddha Nath Sharma}
\affil[1]{National Institute of Technology, Department of Physics, Ravangla, 737139, India}
\affil[2]{George Washington University, Data science program, Washington, DC, 20052, USA}
\affil[3]{Montana Technological University, Department of Mathematical Sciences, Butte, 59701, USA}
\affil[a]{s.majhi@gwu.edu}
\affil[b]{amitra@mtech.edu}
\affil[c]{md.nurujjaman@nitsikkim.ac.in}
\affil[d]{bnsharma09@yahoo.com}

\begin{document}

\date{}

\maketitle

\begin{abstract}
Persistent homology (PH)---the conventional method in topological data analysis---is computationally expensive, requires further vectorization of its signatures before machine learning (ML) can be applied, and captures information along only the spatial axis. For time series data, we propose Euler Characteristic Surfaces (ECS) as an alternative topological signature based on the Euler characteristic ($\chi$)---a fundamental topological invariant. The ECS provides a computationally efficient, spatiotemporal, and inherently discretized feature representation that can serve as direct input to ML models. We prove a stability theorem guaranteeing that the ECS remains stable under small perturbations of the input time series. We first demonstrate that ECS effectively captures the nontrivial topological differences between the limit cycle and the strange attractor in the R\"{o}ssler system. We then develop an ECS-based classification framework and apply it to five benchmark biomedical datasets (four ECG, one EEG) from the UCR/UEA archive. On \textit{ECG5000}, our single-feature ECS classifier achieves $98\%$ accuracy with $O(n+R\cdot T)$ complexity, compared to $62\%$ reported by a recent PH-based method. An AdaBoost extension raises accuracy to $98.6\%$, matching the best deep learning results while retaining full interpretability. Strong results are also obtained on \textit{TwoLeadECG} ($94.1\%$) and \textit{Epilepsy2} ($92.6\%$).
\end{abstract}

\section*{Introduction}
\label{sec:intro}
Topological Data Analysis (TDA) comprises a suite of mathematical, statistical, and algorithmic methods for characterizing the topological and geometric structures inherent in complex datasets~\cite{chazal2021introduction}. The most prominent technique within TDA, Persistent Homology (PH), examines the ``persistence'' of multidimensional homological features---such as connected components ($0$D), loops ($1$D), and voids ($2$D)---across multiple spatial scales~\cite{otter2017roadmap}. PH captures the evolution of these features as they appear (birth) and disappear (death), summarizing them into signatures such as persistence diagrams.

PH has been successfully deployed to analyze multidimensional time series in diverse fields, including finance~\cite{rai2024identifying}, medicine~\cite{ramos2025identifying}, and chemistry~\cite{townsend2020representation}. Recently, researchers have integrated topological tools into machine learning (ML) models for time-series classification~\cite{majumdar2020clustering} and, increasingly, to improve model explainability. This shift toward interpretability is driven by the inherent opacity of deep learning models; in biomedical contexts, ``black-box'' decisions hinder clinical trust~\cite{niu2025explainable}. To gain physician acceptance, ML models should ideally be simple, interpretable, and transparent~\cite{ichinomiya2025machine}. For instance, the authors of~\cite{ichinomiya2025machine} applied PH to recurrence plots of embedded time series to create more transparent classification pipeline.

Despite its utility, PH faces significant limitations. Its computational complexity is high, typically $O(n^{\omega })$ (where $2\le \omega <2.373$ and $n$ is the number of simplices)~\cite{roy2025euler}. This bottleneck stems from the reduction of sparse boundary matrices---a process similar to Gaussian elimination---required to track feature lifetimes~\cite{edelsbrunner2002topological}. 
Furthermore, the resulting persistence diagrams are not naturally structured as feature vectors, making them difficult to integrate directly into standard ML architectures. Consequently, there is a growing need for computationally efficient, ``machine-learning-ready'' topological summaries that can serve as alternatives to persistent homology.

The Euler Characteristic ($\chi$) is a fundamental topological invariant that summarizes the shape of a dataset into a single scalar integer. Building on this, the Euler Characteristic Surface (ECS)~\cite{roy2020understanding} serves as a multiscale spatiotemporal signature that captures the variation of $\chi$ across spatial and temporal scales. Unlike the complex matrix reductions required for PH, ECS computation simply involves counting simplices of varying dimensions. This simplicity results in a computational complexity of $O(n+R\cdot T)$, where $n$ is the number of simplices, and $R$ and $T$ represent the discrete grid sizes for scale and time parameters, respectively~\cite{roy2025euler}.

Consequently, ECS offers a computationally efficient topological descriptor that inherently integrates the temporal dimension of time-series data. As a naturally discretized feature, ECS can be used as direct input for machine learning models, bypassing the vectorization issues associated with other topological tools. The utility of ECS has been demonstrated in fluid dynamics, where it was used to characterize experimental flow patterns in drying micrometer-sized droplets and distinguish between various complex flow regimes~\cite{roy2023characterizing}, and in biomedical imaging, where it was used to identify diabetic retinopathy in retinal images with a focus on creating interpretable diagnostic tools~\cite{beltramo2021euler}. However, these applications focused on spatial data rather than temporal classification tasks. To the best of our knowledge, the use of ECS for interpretable machine learning frameworks specifically tailored to time-series data remains unexplored.

To address this gap, we develop an ECS-based classification framework for binary biomedical time-series data, applied to five benchmark datasets from the UCR/UEA Time Series Classification Archive~\cite{bagnall2018uea}: four ECG datasets (\textit{ECG5000}, \textit{TwoLeadECG}, \textit{ECG200}, \textit{ECGFiveDays}) and one EEG dataset (\textit{Epilepsy2}). The framework constructs an ECS for each time series, ranks the resulting features by their AUC-based discriminative ability on the training set, and classifies test instances using either a single-feature threshold classifier (for maximum interpretability) or an AdaBoost ensemble over the full ECS feature space (for improved accuracy).

\subsection*{Our Contribution}
The principal contributions of this work are as follows:
\begin{enumerate}
\item We introduce the $K$-window method for constructing Euler Characteristic Surfaces (ECS) from Takens-embedded time series, providing a spatiotemporal topological signature that is computationally efficient ($O(n+R\cdot T)$) and inherently discretized---a direct alternative to persistent homology ($O(n^{\omega})$). Unlike PH-based pipelines, our approach extracts topological features directly from the embedded point cloud, avoiding the need for recurrence plots, persistence image vectorization, or dimensionality reduction.
\item We establish a stability result (Theorem~\ref{TemporalStabilityECS}) confirming that the ECS is robust against small perturbations in the input time series. Furthermore, we demonstrate the ability of the ECS to distinguish between periodic and chaotic dynamics within the R\"{o}ssler system, even in the presence of noise.
\item We develop a two-level classification framework: a single-feature threshold classifier for maximum interpretability, and an AdaBoost ensemble for improved accuracy. In both cases, every classification decision is traceable to a specific scale--time coordinate of the ECS, providing transparent and topologically grounded explanations.
\item We apply the classification framework on the five benchmark biomedical datasets (four ECG, one EEG). On \textit{ECG5000}, our method achieves $98.6\%$ accuracy (AUC $0.999$), substantially outperforming the $62\%$ accuracy (AUC $0.90$) reported by a recent PH-based approach on the same dataset~\cite{ichinomiya2025machine}. Strong performance is also observed on \textit{TwoLeadECG} ($94.1\%$) and \textit{Epilepsy2} ($92.6\%$).
\end{enumerate}

\subsection*{Outline}
The rest of the paper is organized as follows. Section~\nameref{sec:method} describes the mathematical preliminaries, including the novel $K$-window method for ECS construction. Section~\nameref{sec:chaos} demonstrates ECS on the R\"{o}ssler system. Section~\nameref{sec:approach} presents the classification framework. Section~\nameref{sec:results} reports results and discussion, and Section~\nameref{sec:conclusion} concludes the work.

\section*{Methods}
\label{sec:method} 
We utilize Euler Characteristic Surfaces (ECS) to extract spatiotemporal topological signatures from the dataset. The ECS serves as a high-dimensional feature vector for time series classification. This section defines the mathematical preliminaries, supplemented by visual representations.

\subsection*{Simplicial Complexes}
\label{sc}
To extract topological information, we model the dataset as a metric space, such as the Euclidean metric, with the data embedded in $\mathbb{R}^d$. Simplicial complexes are then built over the data to uncover the underlying topology and geometry~\cite{chazal2021introduction}. A \emph{simplicial complex} is a topological space composed of multidimensional building blocks or simplices---such as vertices ($0$D), edges ($1$D), triangles ($2$D), tetrahedra ($3$D), and higher-dimensional analogs---that capture higher-order connectivity of data points. For the ease of visualization and exposition, we present the geometric definition below~\cite{dey2022computational}.
Formally, a $k$-dimensional \emph{simplex} (or simply $k$-simplex) $\sigma$ in $\mathbb R^d$ is the convex hull of any $(k+1)$ points $\{a_0,a_1,\ldots,a_k\}\subset\mathbb{R}^d$ in general position. The simplex spanned by a subset of $\sigma$ is called a \emph{face} of $\sigma$.
A \emph{geometric simplicial complex} $\mathcal{K}$ in $\mathbb{R}^d$, also known as a \emph{triangulation}, is a collection of finitely many simplices in $\mathbb{R}^d$ that satisfies the following two conditions:
\begin{enumerate}
\item $\mathcal{K}$ contains every face of each simplex in $\mathcal{K}$.
\item For any two simplices $\sigma, \tau \in \mathcal{K}$, their intersection $\sigma \cap \tau$ is either empty or a face of both $\sigma$ and $\tau$.
\end{enumerate}
The maximum dimension of any simplex in $\mathcal{K}$ is called the \emph{dimension} $k$ of $\mathcal{K}$, and we also refer to it as a \emph{simplicial $k$-complex}.
Some of the commonly constructed simplicial complexes on point clouds include Vietoris--Rips (or simply Rips complexes) and \v{C}ech complexes. In our study, we use a variation of the \v{C}ech complex known as the Alpha complex~\cite{edelsbrunner2002topological}. 

\subsection*{Nerve, Voronoi Cells, and Alpha Complexes}
The understanding of \emph{nerves} and \emph{Voronoi cells} is a prerequisite for the understanding of the Alpha complex; hence, we illustrate these topics and subsequently introduce the Alpha complex. 
Given a collection of subsets $\mathcal{U} = \{U_\alpha\}_{\alpha \in \Lambda}$ of a topological space, we define the \emph{nerve} of $\mathcal{U}$ to be the simplicial complex $N(\mathcal{U})$ whose vertex set is the index set $\Lambda$, and where a subset $\{\alpha_0, \alpha_1, \ldots, \alpha_k\} \subseteq \Lambda$ spans a $k$-simplex in $N(\mathcal{U})$ if and only if
\[
U_{\alpha_0} \cap U_{\alpha_1} \cap \cdots \cap U_{\alpha_k} \neq \emptyset.
\]
Consider a dataset $ P = \{x_1, x_2, \dots, x_n\} \subset \mathbb{R}^d $, and let $ B(x_i, r) = \{ x \in \mathbb{R}^d : \|x_i - x\| \leq r \}$ be the closed ball of radius $r$, then the \emph{\v{C}ech complex} at scale $r$ is the simplicial complex:
\[C(P, r) = \left\{ \sigma \subset P : \bigcap_{x_i \in \sigma} B(x_i, r) \neq \emptyset \right\},\]
i.e., the nerve of the closed $r$-balls around the set of points $P$.
The \emph{Voronoi cell} for $x_i\in P$ is defined as the set of points $x\in \mathbb{R}^{d}$ that are at least as close to $x_i$ as to any other point of $P$. Mathematically, the Voronoi cell for $x_i$ is given by $V(x_i) = \{ x \in \mathbb{R}^d \mid d(x, x_i) \leq d(x, x_j),\ \forall x_j \in P \}$, where $d(x_i,x_j)$ represents the distance metric.
Finally, consider a finite set of points $ P = \{x_1, x_2, \ldots, x_n\} \subset \mathbb{R}^d$ in general position, and consider $ r > 0 $. The \emph{Alpha complex} at scale $ r $ is defined as the nerve of the collection $ \{V(x_i) \cap B(x_i, r)\}_{x_i \in P} $, where $ V(x_i) $ is the Voronoi cell corresponding to the point $ x_i \in P $~\cite{edelsbrunner2002topological}.

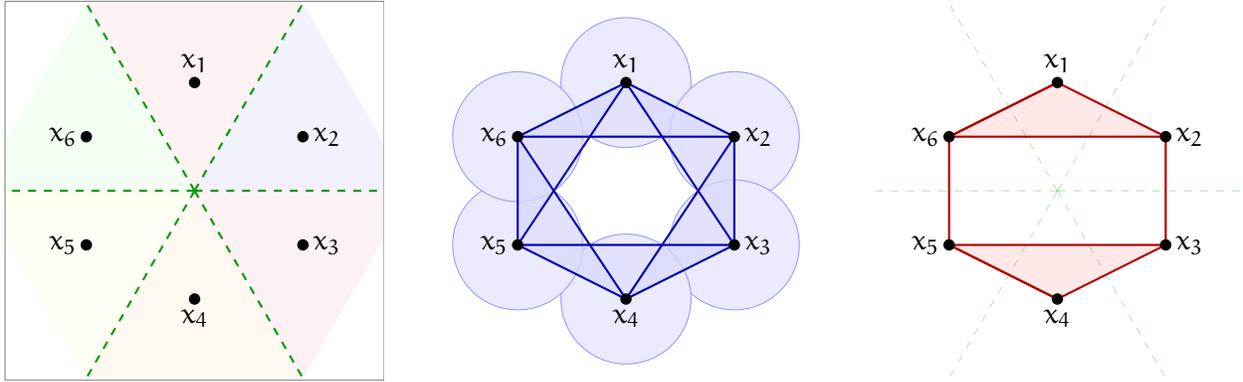
\begin{figure}[hbt]
\centering
\begin{tikzpicture}[scale=0.72]
  \clip (-3.5,-3.5) rectangle (3.5,3.5);
  \fill[blue!5] (0,0) -- (0:4) -- (60:4) -- cycle;
  \fill[red!5] (0,0) -- (60:4) -- (120:4) -- cycle;
  \fill[green!5] (0,0) -- (120:4) -- (180:4) -- cycle;
  \fill[yellow!5] (0,0) -- (180:4) -- (240:4) -- cycle;
  \fill[orange!5] (0,0) -- (240:4) -- (300:4) -- cycle;
  \fill[purple!5] (0,0) -- (300:4) -- (360:4) -- cycle;
  \foreach \angle in {0, 60, 120, 180, 240, 300} {
    \draw[green!60!black, dashed, thick] (0,0) -- (\angle:4);
  }
  \foreach \coord in {(0,2), (2,1), (2,-1), (0,-2), (-2,-1), (-2,1)} {
    \fill[black] \coord circle (3pt);
  }
  \node[above] at (0,2) {$x_1$};
  \node[right] at (2,1) {$x_2$};
  \node[right] at (2,-1) {$x_3$};
  \node[below] at (0,-2) {$x_4$};
  \node[left] at (-2,-1) {$x_5$};
  \node[left] at (-2,1) {$x_6$};
  \draw[gray, thin] (-3.5,-3.5) rectangle (3.5,3.5);
  \node[below] at (0,-3.8) {(a) Voronoi diagram};
\end{tikzpicture}
\hfill
\begin{tikzpicture}[scale=0.72]
  \clip (-3.5,-3.5) rectangle (3.5,3.5);
  \coordinate (P1) at (0,2);
  \coordinate (P2) at (2,1);
  \coordinate (P3) at (2,-1);
  \coordinate (P4) at (0,-2);
  \coordinate (P5) at (-2,-1);
  \coordinate (P6) at (-2,1);
  \foreach \coord in {(P1),(P2),(P3),(P4),(P5),(P6)} {
    \fill[blue!8] \coord circle (1.2);
    \draw[blue!40, thin] \coord circle (1.2);
  }
  \fill[blue!15, opacity=0.5] (P1) -- (P2) -- (P6) -- cycle;
  \fill[blue!15, opacity=0.5] (P2) -- (P3) -- (P1) -- cycle;
  \fill[blue!15, opacity=0.5] (P3) -- (P4) -- (P2) -- cycle;
  \fill[blue!15, opacity=0.5] (P4) -- (P5) -- (P3) -- cycle;
  \fill[blue!15, opacity=0.5] (P5) -- (P6) -- (P4) -- cycle;
  \fill[blue!15, opacity=0.5] (P6) -- (P1) -- (P5) -- cycle;
  \draw[blue!70!black, thick] (P1) -- (P2);
  \draw[blue!70!black, thick] (P2) -- (P3);
  \draw[blue!70!black, thick] (P3) -- (P4);
  \draw[blue!70!black, thick] (P4) -- (P5);
  \draw[blue!70!black, thick] (P5) -- (P6);
  \draw[blue!70!black, thick] (P6) -- (P1);
  \draw[blue!70!black, thick] (P1) -- (P3);
  \draw[blue!70!black, thick] (P2) -- (P4);
  \draw[blue!70!black, thick] (P3) -- (P5);
  \draw[blue!70!black, thick] (P4) -- (P6);
  \draw[blue!70!black, thick] (P5) -- (P1);
  \draw[blue!70!black, thick] (P6) -- (P2);
  \foreach \coord in {P1, P2, P3, P4, P5, P6} {
    \fill[black] (\coord) circle (3pt);
  }
  \node[above] at (P1) {$x_1$};
  \node[right] at (P2) {$x_2$};
  \node[right] at (P3) {$x_3$};
  \node[below] at (P4) {$x_4$};
  \node[left] at (P5) {$x_5$};
  \node[left] at (P6) {$x_6$};
  \node[below] at (0,-3.8) {(b) Nerve (\v{C}ech complex)};
\end{tikzpicture}
\hfill
\begin{tikzpicture}[scale=0.72]
  \clip (-3.5,-3.5) rectangle (3.5,3.5);
  \coordinate (P1) at (0,2);
  \coordinate (P2) at (2,1);
  \coordinate (P3) at (2,-1);
  \coordinate (P4) at (0,-2);
  \coordinate (P5) at (-2,-1);
  \coordinate (P6) at (-2,1);
  \fill[red!15, opacity=0.6] (P1) -- (P2) -- (P6) -- cycle;
  \fill[red!15, opacity=0.6] (P3) -- (P4) -- (P5) -- cycle;
  \draw[red!70!black, thick] (P1) -- (P2);
  \draw[red!70!black, thick] (P2) -- (P3);
  \draw[red!70!black, thick] (P3) -- (P4);
  \draw[red!70!black, thick] (P4) -- (P5);
  \draw[red!70!black, thick] (P5) -- (P6);
  \draw[red!70!black, thick] (P6) -- (P1);
  \draw[red!70!black, thick] (P1) -- (P6);
  \draw[red!70!black, thick] (P2) -- (P6);
  \draw[red!70!black, thick] (P3) -- (P5);
  \draw[red!70!black, thick] (P4) -- (P5);
  \foreach \coord in {P1, P2, P3, P4, P5, P6} {
    \fill[black] (\coord) circle (3pt);
  }
  \node[above] at (P1) {$x_1$};
  \node[right] at (P2) {$x_2$};
  \node[right] at (P3) {$x_3$};
  \node[below] at (P4) {$x_4$};
  \node[left] at (P5) {$x_5$};
  \node[left] at (P6) {$x_6$};
  \foreach \angle in {0, 60, 120, 180, 240, 300} {
    \draw[green!60!black, dashed, very thin, opacity=0.3] (0,0) -- (\angle:4);
  }
  \node[below] at (0,-3.8) {(c) Alpha complex};
\end{tikzpicture}
\caption{Illustration of the (a)~Voronoi diagram, (b)~Nerve (\v{C}ech complex), and (c)~Alpha complex for the hexagonal point set $S=\{(0, 2), (2, 1), (2, -1), (0, -2), (-2, -1), (-2, 1)\}$. In~(a), green dashed lines delineate the Voronoi cells. In~(b), the \v{C}ech complex at a fixed radius includes edges and triangles wherever the corresponding balls mutually intersect. In~(c), the Alpha complex restricts the \v{C}ech complex to simplices consistent with the Voronoi decomposition, yielding a sparser complex that still captures the essential topology.}
\label{fig:voronoi}
\end{figure}

We consider six points forming a regular hexagon, represented as \[S=\{(0, 2), (2, 1), (2, -1), (0, -2), (-2, -1), (-2, 1)\}.\] 
We show Voronoi cells for this dataset in Fig.~\ref{fig:voronoi} (a). Green dotted lines divide the space into six regions, each corresponding to a point. The region of a point is its Voronoi cell, which encompasses all points in the space that are closer to that point than to the other five points.

\subsection*{Euler Characteristic Surfaces}
\label{ecs}
The \emph{Euler characteristic} ($\chi$) is a topological invariant that appears throughout both pure and applied topology~\cite{roy2025euler}. 
For a (finite) simplicial complex $\mathcal{K}$, its Euler characteristic $\chi(\mathcal K)$ is defined by the alternating sum of the number of simplices in each dimension:
\begin{equation}
\chi(\mathcal K)\coloneq \sum_{i=0}^{\dim(\mathcal K)} (-1)^i~C_i(\mathcal K),
\label{eq:EC_eqn_2}
\end{equation}
where $C_i(\mathcal K)$ represents the number of $i$-dimensional simplices in $\mathcal K$. In particular, for an Alpha complex $\mathcal K_r$ constructed at scale $r$, the Euler characteristic $\chi(\mathcal K_r)$ is computed by the same alternating sum over the simplices present at that scale~\cite{carlsson2021topological}.

The Euler characteristic is invariant under homeomorphisms and is therefore unchanged by translations, rotations, affinities, projections, and continuous deformations of the underlying space~\cite{roy2020understanding}.

The \emph{Euler Characteristic Surface (ECS)} is a two-dimensional function that maps the Euler characteristic $\chi$ over a domain defined by the \emph{scale} ($r$) and \emph{time} ($k$) axes. We use the notation $k$, as opposed to the more traditional $t$ for time, as this \emph{time-like} axis will be derived from a traditional temporal axis; see Section~\nameref{subsec:takens}. The Euler characteristic of the Alpha complex $\mathcal K_{r,k}$, denoted $\chi(\mathcal K_{r,k})$, is computed at scale $r$ for the point cloud at time window $k$. 
As both parameters $r$ and $k$ vary, this defines a two-dimensional surface $\chi(\mathcal K_{r,k})\colon \mathcal R\to\mathbb{R}$ on a rectangular domain $\mathcal R\subset\mathbb R^2$. 
In practice, $\mathcal{R}$ is a discrete grid.

The similarity between two ECSs $\chi(\mathcal{K}(r, k))$ and $\chi'(\mathcal{K}'(r, k))$ defined on the same domain $\mathcal R=[0, R]\times[0, K]$ can be quantified using the \emph{$p$-Euler Metric} defined in terms of the $L^{p}(\mathcal R)$ norm as~\cite{roy2025euler}:
\begin{equation}
d_p(\chi, \chi')
= \|\chi - \chi'\|_p
= \left(
\int_{0}^{R} \int_{0}^{K}
\left| \chi(r,k) - \chi'(r,k) \right|^p
\, dk \, dr
\right)^{1/p},
\label{eq:euler_metric}
\end{equation}
where $p = 1$ or $p = 2$.
In practice, for a discrete domain $\mathcal{R}$, we substitute the above double integral with appropriate sums. 

\subsection*{The $K$-window Method for Temporal ECS Construction}
\label{subsec:takens}

We introduce the \textbf{$K$-window method} to partition a Takens-embedded point cloud into temporal blocks, enabling the construction of an ECS with an explicit time axis. This method provides a systematic way to discretize the temporal dimension of a time series into non-overlapping windows, each of which yields a separate Euler characteristic profile across filtration scales.

Given a scalar time series $\{x_t\}_{t=1}^{N}$, the embedded vectors are constructed as
\[
\mathbf{x}_t = \big(x_t,\, x_{t+\tau}, x_{t+2\tau},\ldots,x_{t+(m-1)\tau}\big), \quad t = 1, 2, \ldots, N - (m-1)\tau,
\]
where $\tau$ is the time delay and $m$ is the embedding dimension, typically chosen using the false nearest neighbors method~\cite{kennel1992determining}. 
After embedding, we obtain a point cloud $\mathcal D\subset\mathbb{R}^m$. 

The $K$-window method partitions $\mathcal D$ into $K$ equal-sized, non-overlapping subsets $\{\mathcal D_k\}_{k=1}^K$ as follows.
Fix $K\geq1$ such that $K$ divides $N-(m-1)\tau$, and set the window size $w\coloneq (N-(m-1)\tau)/K$.
The $k$-th subset is
\[
\mathcal D_k = \{\mathbf{x}_{(k-1)w+1}, \mathbf{x}_{(k-1)w+2}, \ldots, \mathbf{x}_{kw}\},\quad k=1,\ldots,K.
\]
If $N - (m-1)\tau$ is not perfectly divisible by $K$, let $w = \lfloor (N - (m-1)\tau) / K \rfloor$. We distribute the remainder by setting the first $K-1$ windows to size $w$, while the final window $\mathcal{D}_K$ absorbs all remaining points to ensure no loss of information. Specifically, $\mathcal{D}_K$ contains $(N - (m-1)\tau) - (K-1)w$ points.
For each subset $\mathcal D_k$, we construct Alpha complexes $\{\mathcal K_{r,k}\}$ over a range of resolution parameters $r$ and compute the corresponding Euler characteristic $\chi(\mathcal K_{r,k})$ at each scale. Assembling these values over all $k$ and $r$ yields the ECS.
The complete workflow is illustrated schematically in Fig.~\ref{fig:flowchart}.
\begin{figure}[hbt]
\centering
\resizebox{0.6 \textwidth}{!}{%
\begin{tikzpicture}[
    node distance=3.5cm, 
    startstop/.style = {rectangle, rounded corners, minimum width=3cm, minimum height=1.2cm, text width=2.8cm, text centered, draw=black, fill=teal!60,font=\footnotesize},
    process/.style = {rectangle, rounded corners, minimum width=3cm, minimum height=1.2cm, text width=2.8cm, text centered, draw=black, fill=teal!50, font =\footnotesize},
    arrow/.style = {thick,->,>=stealth}
]

\node (start) [startstop] {Convert a time series into a point cloud $\mathcal D$ using Takens Embedding};
\node (slide) [process, right of=start] {Choose number of windows $K$ to partition $\mathcal D$ into $K$-many equal-sized subsets $\{\mathcal D_i\}$};
\node (construct) [process, right of=slide] {Construct Alpha complex $\mathcal K_{r,k}$ for a resolution $r$ and subset $\mathcal D_k$};
\node (summarise) [process, below of=construct] {Compute Euler characteristic $\chi$ for various values of $r$ and subsets $\mathcal D_k$};

\node (ECC) [process, left of=summarise] {Construct Euler characteristic Surface $\chi(\mathcal{K}(r, k))$};
\node (l1) [startstop, left of=ECC] {Discretized input feature vector for ML models};

\draw [arrow] (start) -- (slide);
\draw [arrow] (slide) -- (construct);
\draw [arrow] (construct) -- (summarise);
\draw [arrow] (summarise) -- (ECC);
\draw [arrow] (ECC) -- (l1);

\end{tikzpicture}%
} 

\caption{Schematic workflow for constructing Euler Characteristic Surface (ECS) from a time series that provides a ``machine-learning-ready'' topological feature vector.}
\label{fig:flowchart}
\end{figure}
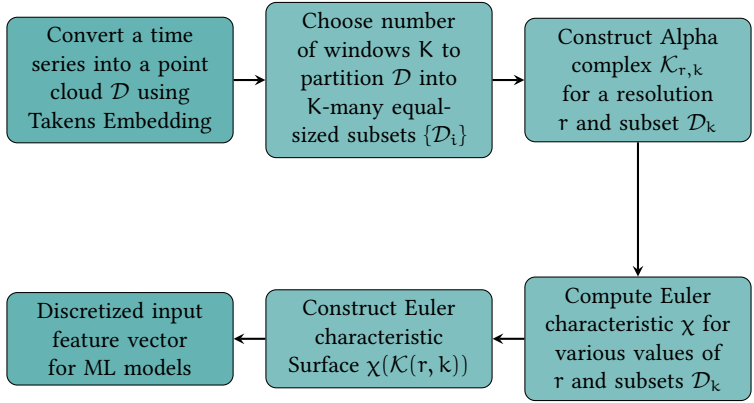

\subsection*{Stability}
\label{subsec:stability}
We prove a stability theorem guaranteeing that our pipeline remains stable under small perturbations.
Before presenting our result (Theorem~\ref{TemporalStabilityECS}), we require additional definitions and notation from algebraic topology.

Using homology of a simplicial complex $\mathcal K$, we can restate an equivalent definition of its Euler characteristic as follows~\cite{roy2025euler}:
\begin{equation}
\chi(\mathcal K)= \sum_{i=0}^{\dim(\mathcal K)} (-1)^i\ \beta_i(\mathcal K),
\label{EC_eqn_3}
\end{equation}
where $\beta_i$ is the $i$-th Betti number of the complex $\mathcal K$.

The \emph{persistence diagram}
${PD}_n(\mathcal{K})$ of a filtration of simplicial complexes $\mathcal{K}_0 \subseteq \mathcal{K}_{1} \subseteq \mathcal{K}_{2}
\subseteq \cdots \subseteq \mathcal{K}_{s}$ and homological degree $n$ is defined~\cite{otter2017roadmap} as the
multiset of points $(a_i, a_j) \in \overline{\mathbb{R}}^2$, with $i < j$,
each of which is assigned a nonzero multiplicity $\mu^n_{i,j}$, with points on the positive diagonal included with infinite multiplicity.

For $p\geq1$, the \textit{degree-p} \emph{Wasserstein Distance} between two Persistence Diagrams $PD_{n}(\mathcal{K}) \text{ and } PD_{n} (\mathcal{L})$ corresponding to the homological degree $n$ is defined as:
\[
W_p(PD_{n} (\mathcal{K}),PD_{n} (\mathcal{L}))
:= \inf_{\varphi : PD_{n} (\mathcal{K}) \to PD_{n} (\mathcal{L})}
\left(
\sum_{x \in PD_{n} (\mathcal{K})} d\bigl(x,\varphi(x)\bigr)^p
\right)^{1/p},
\]
where $\varphi$ represents a bijective mapping between $PD_{n} (\mathcal{K})$ and $PD_{n} (\mathcal{L})$. This evaluates the dissimilarity between the two persistence diagrams. 
We now prove our stability theorem, beginning with a preliminary lemma.
\begin{lemma}[Stability of Alpha Filtrations in $\mathbb{R}^3$]
\label{lem:stability}
Let $\mathcal{D}=\{x_1,\ldots,x_M\}$ be a point cloud in $\mathbb{R}^3$. 
Then, for an arbitrary small  $0< \epsilon <  \frac{1}{2} \min\limits_{1\leq i<j\leq M} \lVert x_i-x_j\rVert_2$, there is $\delta > 0 $  such that for any $\mathcal{D}'=\{x'_1,\ldots,x'_M\}$ with $\max_{1\leq i\leq M}\|x_i-x_i'\|_2\leq\delta$, we have  
\[\| \chi(\mathcal{K}(r)) - \chi(\mathcal{K}'(r)) \|_1 \le \frac{1}{4} M^4\epsilon,\]
where $\mathcal{K}(r)$ and $\mathcal{K}'(r)$ are the corresponding Alpha filtrations on $\mathcal{D}$ and $\mathcal{D}'$, respectively.
\end{lemma}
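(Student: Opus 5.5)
The plan is to write the Euler characteristic curve of an Alpha filtration as a signed sum of step functions, one per simplex, and then compare the two curves simplex by simplex. For a simplex $\sigma$ of the Delaunay complex $\mathrm{Del}(\mathcal D)$, let $f(\sigma)\ge 0$ be its Alpha filtration value, i.e.\ the smallest $r$ for which $\sigma\in\mathcal K(r)$. By \eqref{eq:EC_eqn_2},
\[
\chi(\mathcal K(r))=\sum_{\sigma\in \mathrm{Del}(\mathcal D)}(-1)^{\dim\sigma}\,\mathbf 1\left[r\ge f(\sigma)\right],
\]
and the analogous formula holds for $\mathcal D'$ with filtration values $f'$. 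If the two Delaunay complexes have the same simplices under the labelling $x_i\leftrightarrow x_i'$, then
\[
\|\chi(\mathcal K(r))-\chi(\mathcal K'(r))\|_1\le\sum_{\sigma}\int\bigl|\mathbf 1[r\ge f(\sigma)]-\mathbf 1[r\ge f'(\sigma)]\bigr|\,dr=\sum_\sigma |f(\sigma)-f'(\sigma)|.
\]
Both curves are constant (equal to $1$) beyond the largest filtration value, so the integral is finite whether the domain of $r$ is bounded or all of $[0,\infty)$. Vertices always have $f=f'=0$, so they contribute nothing and the sum runs only over edges, triangles and tetrahedra.

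\textbf{Step 1 (combinatorial stability and continuity).} Use the standing general-position assumption on $\mathcal D$. The Delaunay triangulation and the Alpha filtration values depend continuously on the points: each $f(\sigma)$ is either the circumradius of $\sigma$ (when $\sigma$ is attached, i.e.\ it has an opposite vertex inside its smallest circumsphere) or the Alpha value of a coface that attaches it. Both quantities are continuous functions of the vertex coordinates, and they stay so as long as the strict inequalities of general position persist. Since there are finitely many such strict inequalities (empty-sphere tests and Gabriel/attachment tests), there is a $\delta>0$ such that every $\mathcal D'$ with $\max_i\|x_i-x_i'\|_2\le\delta$ satisfies three things: it has the same Delaunay complex, the same attachment pattern, and $|f(\sigma)-f'(\sigma)|\le\epsilon$ for every simplex $\sigma$. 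Here we also take $\delta\le\epsilon$. The hypothesis $\epsilon<\tfrac12\min_{i<j}\|x_i-x_j\|$ guarantees that the perturbed points remain distinct, so the vertex correspondence is a bijection.

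\textbf{Step 2 (counting).} The number of edges, triangles and tetrahedra is at most
\[
\binom M2+\binom M3+\binom M4\le \frac{M^2}{2}+\frac{M^3}{6}+\frac{M^4}{24}.
\]
For $M\ge 2$ this is at most $M^4\left(\tfrac18+\tfrac1{12}+\tfrac1{24}\right)=\tfrac14 M^4$. For $M=1$ both curves are identically $1$, so the difference is $0$. Combining with Step 1 gives the bound $\tfrac14 M^4\epsilon$.

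\textbf{Main obstacle.} The delicate part is Step 1. Rigorously showing that the Alpha filtration values are locally continuous and that the Delaunay complex is locally constant needs general position; near degenerate configurations (cospherical points) simplices can flip in or out of the Delaunay complex. If the general-position hypothesis is not strong enough to exclude this, the fallback is to work with the full simplex on $M$ vertices. Assign $f(\sigma)=\infty$ to simplices not in the Delaunay complex and argue only about simplices whose Alpha values stay below the common top of the range. I would first try the general-position route, since the paper's definition of the Alpha complex already assumes it.
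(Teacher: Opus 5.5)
Your argument is correct under the general-position convention that the paper builds into its definition of the Alpha complex, but it takes a genuinely different route. You expand $\chi(\mathcal K(r))$ as a signed sum of step functions over a Delaunay complex that stays combinatorially fixed under small perturbations, and you bound the $L^1$ difference termwise by $\sum_\sigma|f(\sigma)-f'(\sigma)|$. This is self-contained and gives the sharper constant $\binom{M}{2}+\binom{M}{3}+\binom{M}{4}$.

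The paper never compares Alpha filtration values directly. Its steps are:
\begin{enumerate}
\item Put the \v{C}ech (smallest-enclosing-ball) filtration on the full simplex on $M$ vertices.
\item Apply the Skraba--Turner cellular Wasserstein theorem in degrees $0,1,2$; this involves only simplices of dimension at most $3$.
\item Convert the resulting $W_1$ bounds into an $L^1$ bound on Euler characteristic curves via Theorem~1 of Roy et al., at the cost of a factor $2$.
\item Return to the Alpha filtration, since Alpha and \v{C}ech filtrations have the same persistent homology.
\end{enumerate}
What this detour buys is independence from genericity. The full simplex never changes combinatorially, and smallest-enclosing-ball radii are continuous (indeed $1$-Lipschitz) in the points for every configuration. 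This matters because Takens windows are often degenerate. The price is two imported theorems and a weaker constant.

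Two corrections to your Step~1 and your fallback.

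First, ``the same attachment pattern'' does not follow from the usual general position (no four coplanar, no five cospherical points). A point can sit exactly on the smallest circumsphere of an edge. You do not need it, though: $f(\sigma)$ is the distance from a vertex of $\sigma$ to its dual Voronoi face, and this is continuous once the Delaunay complex is locally constant.

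Second, and more seriously, the fallback you sketch fails. Suppose $\mathcal D$ contains five points on an empty sphere of radius $\rho$. Then the Voronoi nerve contains a $4$-simplex and all five tetrahedra, each with value $\rho$. A generic perturbation keeps only two or three of those tetrahedra. So some terms $|f(\sigma)-f'(\sigma)|$ are infinite (or $R-\rho$ after truncation at the top $R$ of the range). The two curves agree only through cancellation that a termwise bound cannot see.

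The working fallback is to replace $\mathcal K(r)$ by the \v{C}ech complex. By the nerve theorem both are homotopy equivalent to $\bigcup_i B(x_i,r)$, so they have the same $\chi$ for every $r$. Your termwise bound on the full simplex then gives $\|\chi-\chi'\|_1\le(2^M-M-1)\delta$. Since $\delta$ may depend on $M$ and $\epsilon$, taking $\delta\le M^4\epsilon/(4\cdot 2^M)$ proves the lemma with no genericity assumption at all.
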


\begin{proof}
We note that for any non-degenerate simplex (triangle or tetrahedron) $T$ in $\mathbb{R}^3$ and for any $\epsilon > 0$, there is a $\delta_T > 0$ such that after perturbing the vertices by $\delta_T$, the circumradius of $T$ is changed by at most $\epsilon$ (this follows from the continuity of the circumradius function of a non-degenerate simplex).

Let $\epsilon > 0$ be less than $\frac{1}{2} \min_{1 \leq i \neq j \leq M} \|x_i - x_j\|_2$. Choose $\delta > 0$ such that for $x_i' \in \mathcal{D}'$ we have $\|x_i' - x_i\| < \min \delta_T$, where the minimum is over all non-degenerate simplices formed by points of $\mathcal{D}$. Take the filtering function on a simplex in~\cite[Theorem 2]{skraba2020wasserstein} to be the radius of the smallest enclosing sphere of the points defining the simplex, with $f$ corresponding to points in $\mathcal{D}$ and $g$ corresponding to points in $\mathcal{D}'$.

Then, by the Cellular Wasserstein theorem of Skraba and Turner, we have for $n = 0, 1, 2$ the following inequalities (for the inequalities below, note that the smallest enclosing sphere of a finite set of points in $\mathbb{R}^3$ is realized by either two of those points (diametrically opposite) on the sphere, or by three of those points on the sphere, or by four of those points on the sphere):

\[W_1(PD_0(f), PD_0(g)) \leq \binom{M}{2}\epsilon, \]

\[W_1(PD_1(f), PD_1(g)) \leq \binom{M}{2}\epsilon + \binom{M}{3}\epsilon = \binom{M+1}{3}\epsilon,  \]

\[W_1(PD_2(f), PD_2(g)) \leq \binom{M}{3}\epsilon + \binom{M}{4}\epsilon = \binom{M+1}{4}\epsilon.  \]

Finally, from Theorem 1 of \cite{roy2025euler} we have:
\begin{align*}
\|\chi(\mathcal K(r)) - \chi(\mathcal K'(r))\|_1 &\leq 2\sum_{n=0}^{2} W_1(PD_n(\mathcal K), PD_n(\mathcal K'))\\
&= 2\left[2\binom{M}{2} + 2\binom{M}{3} + \binom{M}{4}\right]\epsilon  \le \frac{1}{4} M^4 \epsilon.
\end{align*}

We sum only over $n = 0, 1, 2$ because the \v{C}ech filtration has the same persistent homology as the Alpha filtration.

\end{proof}

\noindent\textbf{Remark.} Since this paper uses the Takens embedding dimension $m=3$ for most datasets, we state the stability result here only for $\mathbb R^3$. The argument generalizes to $\mathbb R^d$ by summing over homological degrees $n=0,\ldots,d-1$ in the Wasserstein bound, yielding a bound that depends on $\binom{M+1}{d+1}$ rather than $\binom{M+1}{4}$. 

\begin{theorem}[Stability of Takens-Embedded ECS]
\label{TemporalStabilityECS}
Let $\{x_t\}_{t=1}^{N}$ be a discrete non-constant time series sampled from a smooth function $x\colon\mathbb R\to\mathbb R$, and let $0<\epsilon\leq\frac{\sqrt{3}}{2}\min\{|x_s-x_t| : 1\leq s,t\leq N, x_s\neq x_t\}$.
Then there exists $\delta>0$ such that for any discrete scalar time series $\{x'_t\}_{t=1}^{N}$ sampled from a smooth function $x'\colon\mathbb R\to\mathbb R$ with $\sup_{\mathbb R}|x(t)-x'(t)|<\delta$, we have
\[
\| \chi(\mathcal{K}(r, k)) - \chi(\mathcal{K}'(r, k)) \|_1 \le \frac{N^4\epsilon}{4K^3},
\]
where $\tau$ is the time delay and $m=3$ is the embedding dimension from the Takens theorem for both series, $\{\mathcal D_k\}_{k=1}^K$ and $\{\mathcal D'_k\}_{k=1}^K$ are the $K$-window partitions of the corresponding Takens-embedded point clouds $\mathcal D$ and $\mathcal D'$, and $\{\mathcal K_{r,k}\}$ and $\{\mathcal K'_{r,k}\}$ are the corresponding Alpha complexes for $k=1,\ldots,K$ and $r=0,\ldots,R$.
\end{theorem}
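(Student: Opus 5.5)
The plan is to reduce Theorem~\ref{TemporalStabilityECS} to Lemma~\ref{lem:stability}, applied separately to each window $\mathcal D_k$, and then sum the window-wise bounds over $k$.

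\textbf{Step 1: pointwise control of the embedded points.} Given $\sup_{\mathbb R}|x-x'|<\delta_0$, every sample satisfies $|x_t-x'_t|<\delta_0$. Each embedded vector $\mathbf{x}_t=(x_t,x_{t+\tau},x_{t+2\tau})$ therefore satisfies
\[
\|\mathbf{x}_t-\mathbf{x}'_t\|_2<\sqrt{3}\,\delta_0 .
\]
The $K$-window partition uses the same index sets for both series. Hence $\mathcal D'_k$ is a perturbation of $\mathcal D_k$ with a matched labelling of points and maximal displacement at most $\sqrt3\,\delta_0$.

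\textbf{Step 2: the hypothesis of Lemma~\ref{lem:stability} on each window.} The lemma needs $\epsilon$ to be below half the minimal interpoint distance of $\mathcal D_k$. Take two distinct embedded points $\mathbf{x}_s\neq\mathbf{x}_t$. They differ in at least one coordinate, and that coordinate difference is at least $\mu:=\min\{|x_s-x_t|: x_s\neq x_t\}$. So $\|\mathbf{x}_s-\mathbf{x}_t\|_2\geq\mu$.

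The stated bound $\epsilon\le\frac{\sqrt3}{2}\mu$ is meant to play this role. However, it only gives $\epsilon<\frac12\min\|\mathbf{x}_s-\mathbf{x}_t\|$ if distinct embedded points are actually separated by $\sqrt3\mu$. That holds when they differ in all three coordinates; in general one only has separation $\ge\mu$. I would first attempt to justify the $\sqrt3$ factor from this observation, and otherwise fall back to $\epsilon<\mu/2$, which changes only the constant.

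There is a second gap: repeated embedded points, where $\mathbf{x}_s=\mathbf{x}_t$, violate the lemma's assumption that the points are distinct. I would handle this by treating $\mathcal D_k$ as a set, so that coincident points are merged, or by noting that generic general position holds. This hypothesis-matching step, together with possible degeneracy (non-general position of Takens points), is the main obstacle.

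For each $k$, Lemma~\ref{lem:stability} supplies a $\delta_k$. Set $\delta_0=\min_k\delta_k/\sqrt3$.

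\textbf{Step 3: apply the lemma per window.} Window $k$ has $M_k$ points, where $M_k\approx w=(N-2\tau)/K\le N/K$ (the last window may be slightly larger). Lemma~\ref{lem:stability} gives
\[
\|\chi(\mathcal K(\cdot,k))-\chi(\mathcal K'(\cdot,k))\|_1\le \tfrac14 M_k^4\epsilon\le \tfrac14 (N/K)^4\epsilon .
\]

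\textbf{Step 4: sum over windows.} The $L^1$ norm on $\mathcal R$ is the sum over $k$ of the $L^1$ norms in $r$. Summing the $K$ window bounds gives
\[
K\cdot\tfrac14\,\frac{N^4}{K^4}\,\epsilon=\frac{N^4\epsilon}{4K^3},
\]
which is the claimed bound.

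The oversized final window needs care. I would either assume $K$ divides $N-2\tau$, or absorb the remainder: since $M_K<2N/K$ when $K$ does not divide evenly, this costs only a constant, and the slack in the lemma's estimate $2[2\binom M2+2\binom M3+\binom M4]\le\frac14M^4$ may absorb it.
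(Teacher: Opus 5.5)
Your plan is the paper's proof: apply Lemma~\ref{lem:stability} on each window $\mathcal D_k$ and add the $K$ window bounds $\frac{1}{4}(N/K)^4\epsilon$. At your Step~2 the paper simply asserts $\|\mathbf{x}_s-\mathbf{x}_t\|_2^2\ge m\mu^2=3\mu^2$ whenever $x_s\neq x_t$. Your doubt is justified: for $\tau=1$ and the samples $0,2,1,2,1$ one gets $\|\mathbf{x}_1-\mathbf{x}_3\|_2=1=\mu$, so for $\epsilon$ near $\frac{\sqrt3}{2}\mu$ the lemma's hypothesis fails. The paper also ignores pairs with $x_s=x_t$ and assumes $K\mid N$ ``without loss of generality''.

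The right repair is not to change a constant. Use instead that the lemma holds for \emph{every} sufficiently small $\epsilon$, while the conclusion only gets weaker as $\epsilon$ grows. On window $k$, invoke the lemma with some $0<\epsilon_k\le\epsilon\,(N/K)^4/M_k^4$ that also lies below half the minimal interpoint distance of $\mathcal D_k$, and set $\delta=\min_k\delta_k/\sqrt3$. Each window then contributes at most $\frac{1}{4}(N/K)^4\epsilon$, which settles the $\sqrt3$ issue and the oversized last window at once.

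Your own last-window argument does not work as written. The estimate $M_K<2N/K$ is false in general: for $\tau=1$ and $N=2K+1$ one gets $w=1$ and $M_K=K$. Moreover, the slack in $2\bigl[2\binom{M}{2}+2\binom{M}{3}+\binom{M}{4}\bigr]\le\frac{1}{4}M^4$ is at most a factor $3$. Here that window alone would contribute about $K^4\epsilon/12$, against a total target of about $4K\epsilon$.

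The point that neither you nor the paper closes is coincident embedded points, $\mathbf{x}_s=\mathbf{x}_t$ with $s\neq t$ in one window. This can happen for a smooth non-constant signal, e.g.\ a periodic one sampled commensurately. Then the minimal interpoint distance is $0$ and the lemma gives nothing. Neither of your patches works:
\begin{itemize}
\item Merging coincident points in $\mathcal D_k$ destroys the index-matched pairing with $\mathcal D'_k$, whose perturbed copies are generally distinct.
\item Appealing to genericity changes a statement made for a fixed $x$.
\end{itemize}

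What works is to rerun the lemma's Skraba--Turner argument on the full simplex over the index set $\{(k-1)w+1,\ldots,kw\}$. Filter a simplex $\sigma$ by the radius of the smallest ball enclosing $\{\mathbf{x}_i\}_{i\in\sigma}$; this is the \v{C}ech filtration on labels. It is well defined and monotone even with repeated points, and it moves by at most $\max_i\|\mathbf{x}_i-\mathbf{x}'_i\|_2<\sqrt3\,\delta$. By the nerve theorem, its Euler characteristic at each $r$ equals that of the union of balls, hence that of the Alpha complex of the underlying set. The same simplex count then yields the window bound with no distinctness or general-position assumption.
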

\begin{proof}
Without loss of generality, we assume that $K$ divides $N$ and set $w=N/K$.
For any $s\neq t$ with $x_s\neq x_t$, the squared Euclidean distance between the Takens-embedded points satisfies
\[
\|\mathbf{x}_s-\mathbf{x}_t\|_2^2
\geq m\left[\min\{|x_s-x_t| : 1\leq s, t\leq N, x_s\neq x_t\}\right]^2 \geq 3\cdot\frac{4}{3}\epsilon^2=4\epsilon^2.
\]
For any $1\leq k\leq K$, the point cloud $\mathcal{D}_k$ contains $w$ points, and
\[
\frac{1}{2}\min_{(k-1)w+1\leq s<t\leq kw}\|\mathbf{x}_s-\mathbf{x}_t\|_2\geq\epsilon.
\]
Lemma~\ref{lem:stability} then yields a sufficiently small $\delta>0$ such that 
\[
\| \chi(\mathcal{K}_k(r)) - \chi(\mathcal{K}'_k(r)) \|_1 \le \frac{1}{4} w^4\epsilon.
\]
Summing over all $K$ windows, we obtain
\[
\|\chi(\mathcal{K}(r, k)) - \chi(\mathcal{K}'(r, k))\|_1
=\sum_{k=1}^{K} \|\chi(\mathcal{K}_k(r)) - \chi(\mathcal{K}_k'(r))\|_1
\leq\frac{1}{4}w^4\epsilon\cdot K
=\frac{1}{4}(N/K)^4\epsilon\, K
=\frac{N^4\epsilon}{4K^3}.
\]
\end{proof}

\section*{ECS for R\"{o}ssler System}
\label{sec:chaos}

In this section, we illustrate the concept of Euler characteristic surfaces (ECS) using the well-known R\"{o}ssler system. The system comprises three coupled nonlinear differential equations that can exhibit both periodic and chaotic dynamics under continuous-time evolution~\cite{rossler1976equation}:
\begin{equation}
\label{eq:rossler}
\begin{cases}
\dot{x} &= -y - z, \\
\dot{y} &= x + a y, \\
\dot{z} &= b + z(x - c),
\end{cases}
\end{equation}
where $x$, $y$, and $z$ are state variables, and $a$, $b$, and $c$ are real-valued parameters controlling the system’s behavior. For $a = 0.20$ and $b = 0.20$, the system transitions from periodic to chaotic dynamics as the control parameter $c$ is increased.


We consider the $x$--component time series consisting of $300$ time steps from $t=0$ to $t=150$ with step size $\Delta t=0.5$, derived from Eq.~\ref{eq:rossler} for different values of $c$. Fig.~\ref{fig:rossler_ecs}(a) shows the time series for the period-one limit cycle ($c=2.3$), while Fig.~\ref{fig:rossler_ecs}(b) shows the time series for the strange attractor ($c=7.3$). We apply the Takens embedding theorem~\cite{takens2006detecting} to reconstruct the state-space trajectory in $\mathbb{R}^3$; the resulting phase portraits are shown in Figs.~\ref{fig:rossler_ecs}(c) and~\ref{fig:rossler_ecs}(d).

\begin{figure}[hbt]
    \centering
\includegraphics[width=0.62\textwidth]{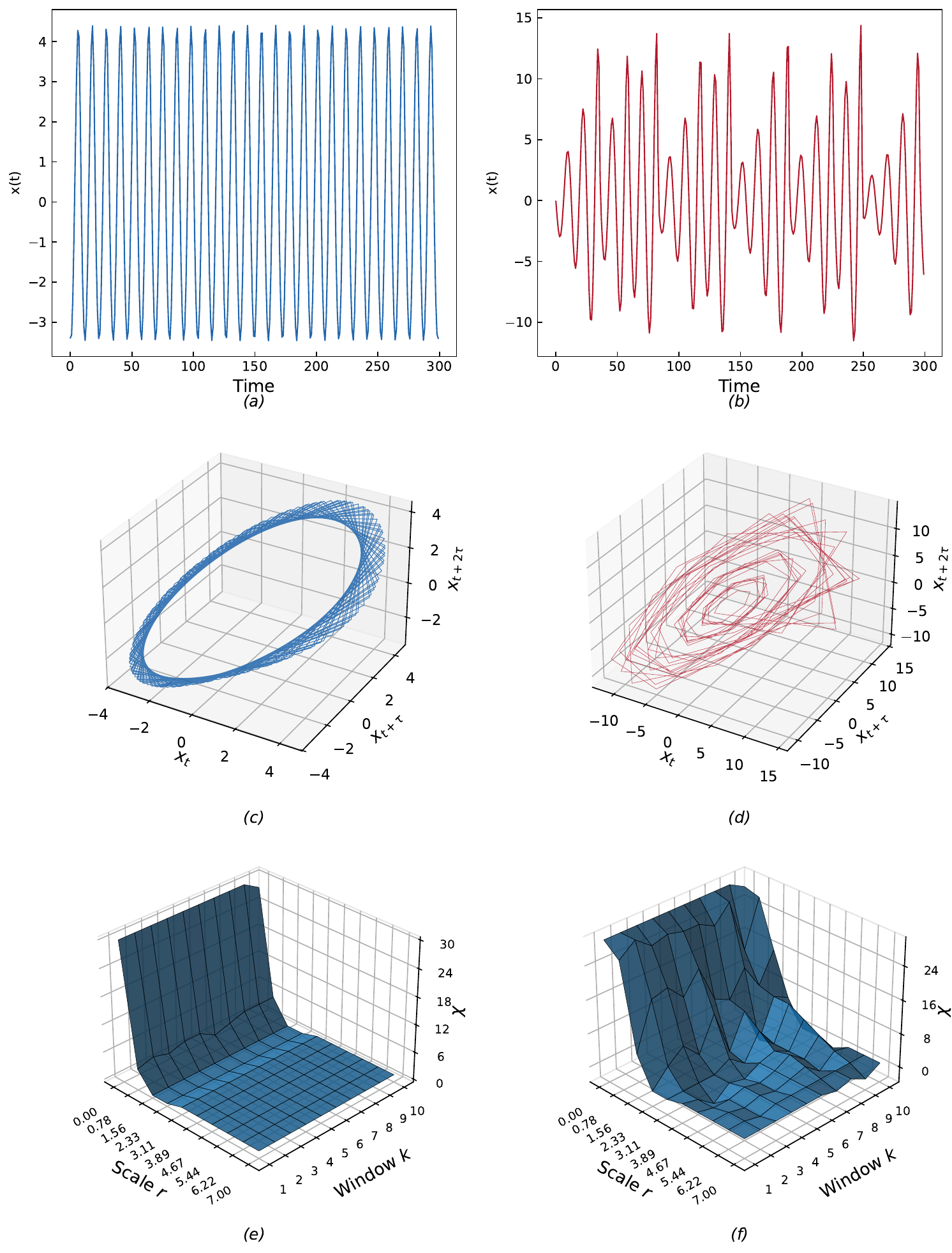}
    \caption{Time series, Takens-embedded phase portraits, and Euler Characteristic Surfaces (ECSs) of the R\"{o}ssler system for two values of the control parameter $c$. \emph{Column~1 (periodic, $c=2.3$):} (a)~$x$--component time series, (c)~phase portrait in $\mathbb{R}^3$, (e)~ECS. \emph{Column~2 (chaotic, $c=7.3$):} (b)~time series, (d)~phase portrait, (f)~ECS.}
    \label{fig:rossler_ecs}
\end{figure}

Following the scheme in Fig.~\ref{fig:flowchart} with $\tau =1$ and $m=3$, we construct the ECS for both dynamical regimes. The resulting surfaces are shown in Figs.~\ref{fig:rossler_ecs}(e) and~\ref{fig:rossler_ecs}(f), respectively. The ECS captures the variation of the Euler characteristic $\chi$ with respect to the filtration scale $r$ and the time window $k$, providing a spatiotemporal topological description of the underlying dynamics. Distinct surface structures are observed for the two parameter values, reflecting qualitative differences between the periodic and chaotic regimes.

For $c = 2.3$ (Fig.~\ref{fig:rossler_ecs}(e)), the ECS exhibits a relatively smooth and regular surface, indicating limited variation in $\chi$ across both scale and time. This behavior is consistent with the periodic nature of the dynamics. In contrast, for $c = 7.3$ (Fig.~\ref{fig:rossler_ecs}(f)), the ECS displays pronounced irregularities and stronger fluctuations. Moreover, $\chi$ takes predominantly higher values in the chaotic regime at comparable scale--time coordinates, reflecting the increased topological complexity of the strange attractor.

To quantify the distinction between the two regimes, we compute the absolute difference $|\mathrm{ECS}_{\mathrm{chaotic}} - \mathrm{ECS}_{\mathrm{periodic}}|$ across the scale--time grid (Fig.~\ref{fig:ecs_diff_heatmap}(a)). The largest divergence is concentrated at low filtration scales across all time windows, pinpointing the region where the topological signatures differ most. To further assess the stability (Theorem~\ref{TemporalStabilityECS}) of the ECS under perturbations, we add Gaussian noise to time series from both regimes and compute pairwise $L_{1}$ distances between the resulting ECSs. The noise intensity is varied from $0\%$ to $10\%$ of the amplitude of the unperturbed time series, generating $100$ noisy realizations for each regime.
\begin{figure}[hbt]
    \centering
    \includegraphics[width=15cm]{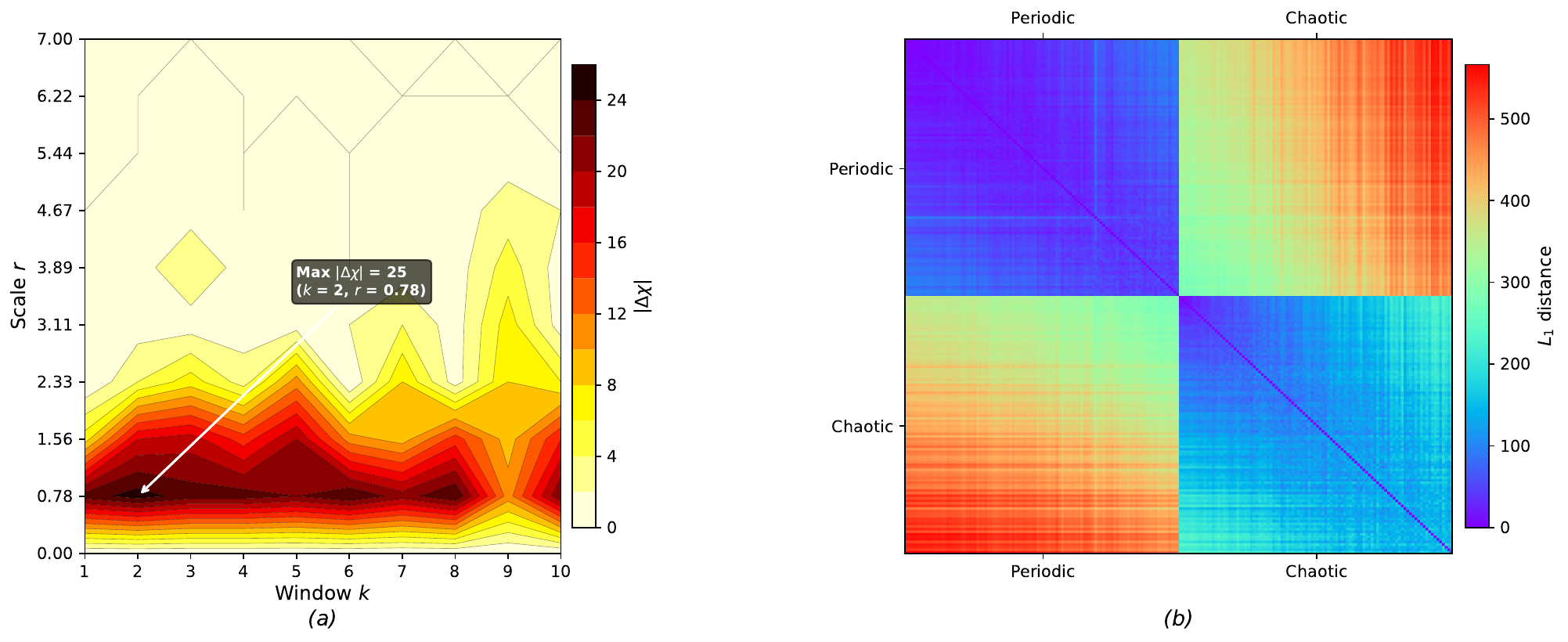}
    \caption{(a)~Absolute difference $|\mathrm{ECS}_{\mathrm{chaotic}} - \mathrm{ECS}_{\mathrm{periodic}}|$ across the scale--time grid, with the annotated arrow indicating the coordinate of maximum divergence. (b)~Heatmap of pairwise $L_{1}$ distances between ECSs constructed from perturbed time series of the periodic ($c=2.3$) and chaotic ($c=7.3$) regimes. Intra-regime distances (off- diagonal blocks) are markedly smaller than inter-regime distances (diagonal blocks).}
    \label{fig:ecs_diff_heatmap}
\end{figure}
Figure~\ref{fig:ecs_diff_heatmap}(b) shows the resulting $L_1$ distance heatmap. The intra-regime distances are markedly smaller than the inter-regime distances, even for the perturbed time series, confirming that the ECS is robust to moderate noise levels.

Overall, the ECS effectively captures the distinction in dynamical behavior of the R\"{o}ssler system as the control parameter $c$ is varied, and the $L_{1}$ metric can quantitatively differentiate between the two regimes even in the presence of noise. In the subsequent sections, we apply the ECS to classify empirical biomedical time series data.

\section*{Classification Framework}
\label{sec:approach}
We develop two classification frameworks based on Euler Characteristic Surfaces (ECSs). The first uses a single ECS feature with a threshold classifier for maximum interpretability. The second boosts multiple weak learners via Adaptive Boosting (AdaBoost) to improve accuracy while preserving interpretability.
\subsection*{Single-Feature Threshold Classifier (Decision Stump)}
\label{subsec:decision_stump}
The idea behind the single-feature classifier is simple: from the $K \times R$ ECS features, select the one that best separates the two classes and classify using a single threshold on that feature. Feature selection is performed by ranking all $K \times R$ features by their Area Under the ROC Curve (AUC) on the training set, and the optimal threshold for the top-ranked feature is determined using the Youden index. This yields a classifier that is fully interpretable---every decision traces to a single Euler characteristic value $\chi(k_i, r_j)$ at a specific scale--time coordinate of the ECS. The complete workflow is shown schematically in Fig.~\ref{fig:ecs_cls_flowchart}.
\begin{figure}[htb]
\centering
\resizebox{0.7\textwidth}{!}{%
\tikzset{
  startstop/.style = {rectangle, minimum width=3cm, minimum height=2cm, text centered, text width=3cm, draw=black, fill=orange!30, font=\bfseries},
  process/.style = {rectangle, minimum width=3cm, minimum height=2cm, text centered, text width=3cm, draw=black, fill=orange!30, font=\bfseries},
  arrow/.style = {thick,->,>=stealth}
}

\begin{tikzpicture}[node distance=4cm and 3cm]

\node (input) [startstop] {1. Input time series};

\node (embed) [process, right of=input] {2. Perform Takens embedding to generate a point cloud for each time series};

\node (tda) [process, right of=embed] {3. Compute the ECS for each point cloud as a feature vector};

\node (split) [process, right of=tda] {4. Select the best ECS feature (and corresponding threshold) based on the AUC score};

\node (select) [process, below of=split] {5. Select the best threshold for the selected feature based on Youden’s technique};

\node (apply) [process, left of=select] {6. Apply the train-derived decision rule on test data and classify signals};

\node (output) [startstop, left of=apply] {7. Evaluate performance};

\draw [arrow] (input) -- (embed);
\draw [arrow] (embed) -- (tda);
\draw [arrow] (tda) -- (split);
\draw [arrow] (split) -- (select);
\draw [arrow] (select) -- (apply);
\draw [arrow] (apply) -- (output);

\end{tikzpicture}}
\caption{The flowchart represents the ECS-based time series classification framework.}
\label{fig:ecs_cls_flowchart}
\end{figure}
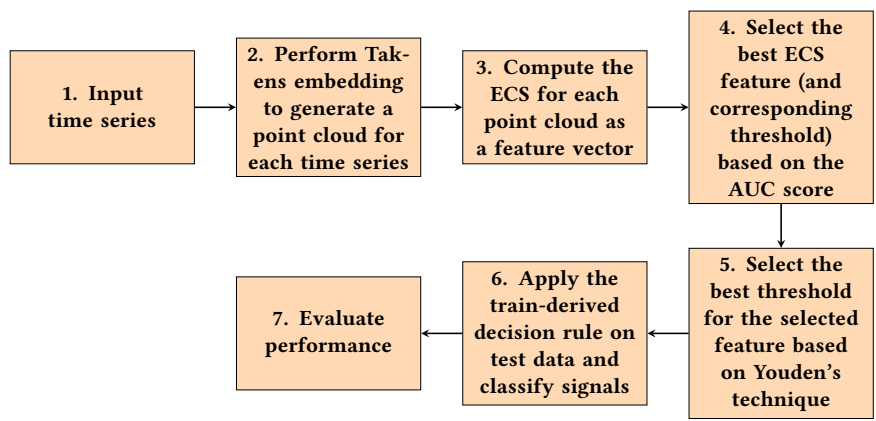

Each step in the flowchart is explained in detail below.
\begin{enumerate}
    \item A time series with the associated class label is used to initiate the process. 
    \item Each scalar time series is reconstructed to an $m$-dimensional point cloud using Takens' delay embedding theorem. We heuristically consider lag $\tau = 1$ for consistency.
    \item An ECS for each point cloud is constructed using the scheme in Fig.~\ref{fig:flowchart}. If the ECS spans $R$ spatial filtration scales and $K$ temporal windows, it yields $K \times R$ features per time series. Specifically, the Euler characteristic $\chi$ at each combination of temporal window $k_i$ and spatial scale $r_j$ constitutes one feature, numbered sequentially as $\chi(k_1,r_1), \chi(k_1,r_2), \ldots, \chi(k_K,r_R)$.
    \item For each feature $\chi(k_i,r_j)$, a receiver operating characteristic (ROC) curve is constructed from the training set. The ROC curve plots the true positive rate (TPR) against the false positive rate (FPR), defined as:
    \[
    \text{FPR} = \frac{\text{False Positives}}{\text{False Positives} + \text{True Negatives}}, \qquad
    \text{TPR} = \frac{\text{True Positives}}{\text{True Positives} + \text{False Negatives}}.
    \]
    Classification is based on a single-feature threshold classifier, also known as a ``decision stump''~\cite{viola2004robust}, defined as:
    \[
    h(\chi, p, \theta) = 
    \begin{cases} 
    1 & \text{if } p\chi < p\theta, \\
    0 & \text{otherwise,}
    \end{cases}
    \]
    where $h$ is the predicted class label, $p \in \{-1, +1\}$ is the polarity, and $\theta$ is the threshold. Features are ranked by their Area Under the ROC Curve (AUC). If the AUC is below 0.5, the polarity is reversed from $p=1$ to $p=-1$~\cite{fawcett2006introduction}. AUC-based feature ranking is motivated by its insensitivity to changes in class distribution, an attractive property for biomedical time series~\cite{fawcett2006introduction,serrano2010feature,ta2025novel,sun2017avc}. The feature with the highest AUC is selected for classifying the test set. 
    
    \item For the feature with the highest AUC, the optimum threshold is found using the Youden index~\cite{youden1950index}:
    \[
    J = \text{TPR} - \text{FPR} = \text{Sensitivity} + \text{Specificity} - 1.
    \]
    The threshold that maximizes $J$ is selected as the operating point~\cite{fluss2005estimation}.
    \item The selected feature and its Youden-optimal threshold are applied to classify the test time series. The AUC on the test set is also computed to assess classification performance.
\end{enumerate}

\subsection*{AdaBoost Ensemble Classifier}
\label{subsec:adaboost}

While the single-feature threshold classifier provides an interpretable and computationally lightweight baseline, its classification power is limited to the discriminative capacity of one ECS feature. To overcome this, we construct an ensemble of decision stumps over the full $(K \times R)$-dimensional ECS feature space using the Adaptive Boosting (AdaBoost) algorithm~\cite{schapire1999brief}. Each weak learner in the ensemble is a decision stump on a single ECS feature $\chi(k_i, r_j)$, so that the ensemble's decision remains fully interpretable: the aggregated $\alpha$-contribution of each feature can be visualized as a heatmap over the $(k, r)$ grid, directly revealing which scale--time regions drive the classification. The complete workflow is illustrated in Fig.~\ref{fig:ecs_ada_flowchart}.

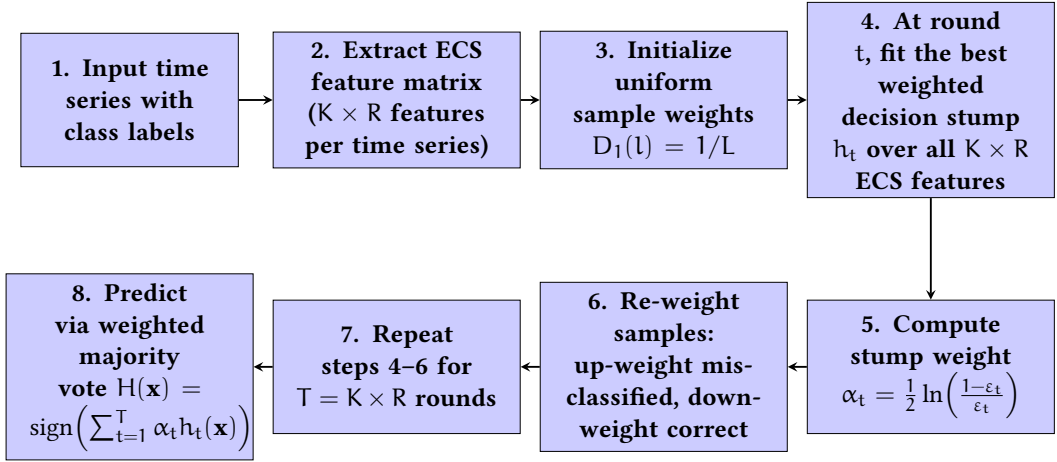
\begin{figure}[h!]
\centering
\resizebox{0.85\textwidth}{!}{%
\tikzset{
  startstop/.style = {rectangle, minimum width=3.2cm, minimum height=2cm, text centered, text width=3cm, draw=black, fill=blue!20, font=\bfseries},
  process/.style = {rectangle, minimum width=3.7cm, minimum height=2cm, text centered, text width=3cm, draw=black, fill=blue!20, font=\bfseries},
  arrow/.style = {thick,->,>=stealth}
}

\begin{tikzpicture}[node distance=4cm and 3cm]

\node (input) [startstop] {1. Input time series with class labels};
\node (ecs) [process, right of=input] {2. Extract ECS feature matrix ($K \times R$ features per time series)};
\node (init) [process, right of=ecs] {3. Initialize uniform sample weights $D_1(l) = 1/L$};
\node (stump) [process, right of=init] {4. At round $t$, fit the best weighted decision stump $h_t$ over all $K \times R$ ECS features};
\node (alpha) [process, below of=stump] {5. Compute stump weight $\alpha_t = \frac{1}{2}\ln\!\left(\frac{1-\varepsilon_t}{\varepsilon_t}\right)$};
\node (update) [process, left of=alpha] {6. Re-weight samples: up-weight misclassified, down-weight correct};
\node (converge) [process, left of=update] {7. Repeat steps 4--6 for $T = K \times R$ rounds};
\node (predict) [process, left of=converge] {8. Predict via weighted majority vote $H(\mathbf{x}) = \mathrm{sign}\!\left(\sum_{t=1}^{T}\alpha_t h_t(\mathbf{x})\right)$};

\draw [arrow] (input) -- (ecs);
\draw [arrow] (ecs) -- (init);
\draw [arrow] (init) -- (stump);
\draw [arrow] (stump) -- (alpha);
\draw [arrow] (alpha) -- (update);
\draw [arrow] (update) -- (converge);
\draw [arrow] (converge) -- (predict);

\end{tikzpicture}}
\caption{Flowchart of the ECS-based AdaBoost classification framework. 
The number of weak learners $T$ is set equal to $K \times R$, matching the 
dimensionality of the ECS feature vector.}
\label{fig:ecs_ada_flowchart}
\end{figure}

\paragraph{Standard AdaBoost procedure.}
The ECS feature matrix $\mathbf{X} \in \mathbb{R}^{L \times KR}$ is constructed for $L$ training samples using the $K$-window scheme described in Section~\nameref{subsec:decision_stump}. Each sample $l$ is initialized with equal weight $D_1(l) = 1/L$. At each boosting round $t \in \{1,\ldots,T\}$, the decision stump $h_t$ that minimizes the weighted classification error
\begin{equation}
    \varepsilon_t = \sum_{l=1}^{L} D_t(l)\; 
    \mathbf{1}\!\left[h_t\!\left(\chi_{(k_i,r_j)}^{(l)}\right) \neq c^{(l)}\right]
\end{equation}
is selected from the full set of $K \times R$ ECS features, where $c^{(l)} \in \{0,1\}$ is the true class label and $\mathbf{1}[\cdot]$ is the indicator function. If $\varepsilon_t > 0.5$, the stump polarity is reversed so that $\varepsilon_t \leq 0.5$ always holds. The stump weight is set as
\begin{equation}
    \alpha_t = \frac{1}{2} \ln\!\left(\frac{1 - \varepsilon_t}{\varepsilon_t}\right),
\end{equation}
and sample weights are updated by
\begin{equation}
    D_{t+1}(l) = 
    \frac{D_t(l)\;\exp\!\left(-\alpha_t\, c^{(l)}\, h_t\!\left(\mathbf{x}^{(l)}\right)\right)}
         {\displaystyle\sum_{l=1}^{L} D_t(l)\;\exp\!\left(-\alpha_t\, 
         c^{(l)}\, h_t\!\left(\mathbf{x}^{(l)}\right)\right)},
\end{equation}
where labels are re-encoded as $c^{(l)} \in \{-1,+1\}$ for the exponential loss. Misclassified samples receive higher weight in the next round, forcing subsequent stumps to focus on harder examples. The final prediction for a test instance $\mathbf{x}^*$ is obtained by weighted majority vote:
\begin{equation}
    \hat{c} = H(\mathbf{x}^*) = 
    \operatorname{sign}\!\left(\sum_{t=1}^{T} \alpha_t\, h_t(\mathbf{x}^*)\right).
\end{equation}

\paragraph{ECS-specific design choices.}
Three aspects of our AdaBoost implementation are tailored to the ECS feature space:
\begin{enumerate}
\item \textit{Number of rounds.} We set $T = K \times R$, so that each ECS feature---defined by a unique combination of temporal window $k_i$ and spatial scale $r_j$---has the opportunity to contribute at least one stump to the ensemble.
\item \textit{Grid size selection.} The optimal values of $K$ and $R$ (and hence $T$) are determined by 5-fold stratified cross-validation over the training set, scanning grid sizes from $2 \times 2$ to $10 \times 10$ and selecting the configuration that maximizes the mean AUC across folds.
\item \textit{Interpretability via the $\alpha$-heatmap.} Because every weak learner operates on a single ECS feature $\chi(k_i, r_j)$, the $\alpha_t$ weights accumulated over all rounds can be aggregated per feature and visualized as a two-dimensional heatmap over the $(k, r)$ grid. This provides a direct, interpretable measure of which scale--time regions of the ECS contribute most to the ensemble decision---a level of transparency rarely available in ensemble methods.
\end{enumerate}

\section*{Results and Discussion}
\label{sec:results}

We evaluate the ECS-based classification framework on five benchmark biomedical datasets from the UCR/UEA archive, progressing from a detailed analysis of the primary dataset (\textit{ECG5000}) to seizure detection (\textit{Epilepsy2}) and three additional ECG benchmarks.

\subsection*{Classification of the \emph{ECG5000} Dataset}
\label{subsec:ecg5000}

We classify the benchmark \textit{ECG5000} dataset using the ECS-based classification scheme. This dataset is sourced from the UCR time series classification archive~\cite{bagnall2018uea} and contains 20 hours of electrocardiogram (ECG) signal recorded from a single subject~\cite{doi:10.1161/01.CIR.101.23.e215}.
Each sample represents one heartbeat comprising 140 time steps. A total of 5000 samples are partitioned into five classes, with 500 samples reserved for training and 4500 for testing. Consistent with Ref.~\cite{ichinomiya2025machine}, we retain only the two most populous classes, as the remaining three are substantially underrepresented.

\subsubsection*{ECS for ECG Signals}

The phase space of each time series was reconstructed using a lag of $\tau = 1$, and the false nearest-neighbor criterion indicated an optimal embedding dimension of $m = 3$. The ECS grid size was optimized by evaluating the mean cross-validated AUC of the best single feature through 5-fold cross-validation on the training set, over grid sizes ranging from $2 \times 2$ to $10 \times 10$. The results are shown in Fig.~\ref{fig:grid_size}. The AUC rises steeply from a $3 \times 3$ grid and reaches a peak at the $10 \times 10$ configuration, which is therefore adopted for all single-feature experiments.

\begin{figure}[hbt]
    \centering
    \includegraphics[width=0.7\linewidth]{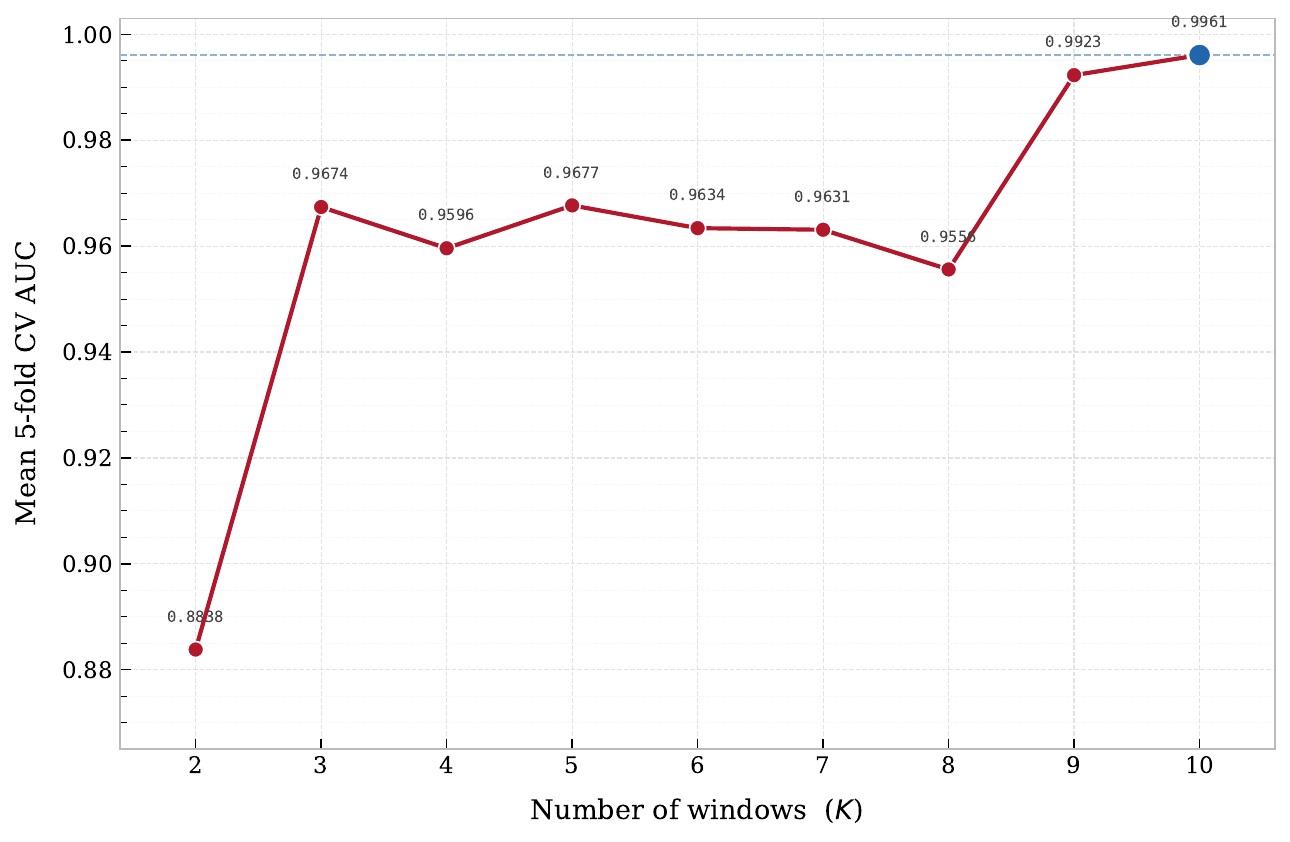}
    \caption{Mean cross-validated AUC of the best ECS feature as a function of the
    grid size ($K$) used to construct the ECS for the \textit{ECG5000} dataset.}
    \label{fig:grid_size}
\end{figure}

Figures~\ref{fig:ECS_ecg5000}(a) and (b) display representative time series from each class, while Figs.~\ref{fig:ECS_ecg5000}(c) and (d) show their respective ECSs. A clear structural difference in the Euler characteristic $\chi$ at specific scale--time coordinates is apparent between the two classes. We study differences in the Euler characteristic $|\Delta\chi|$ at the particular scale-time coordinate to understand the topological differences between the two classes. 

\begin{figure}[htb]
    \centering    \includegraphics[width=0.9\linewidth]{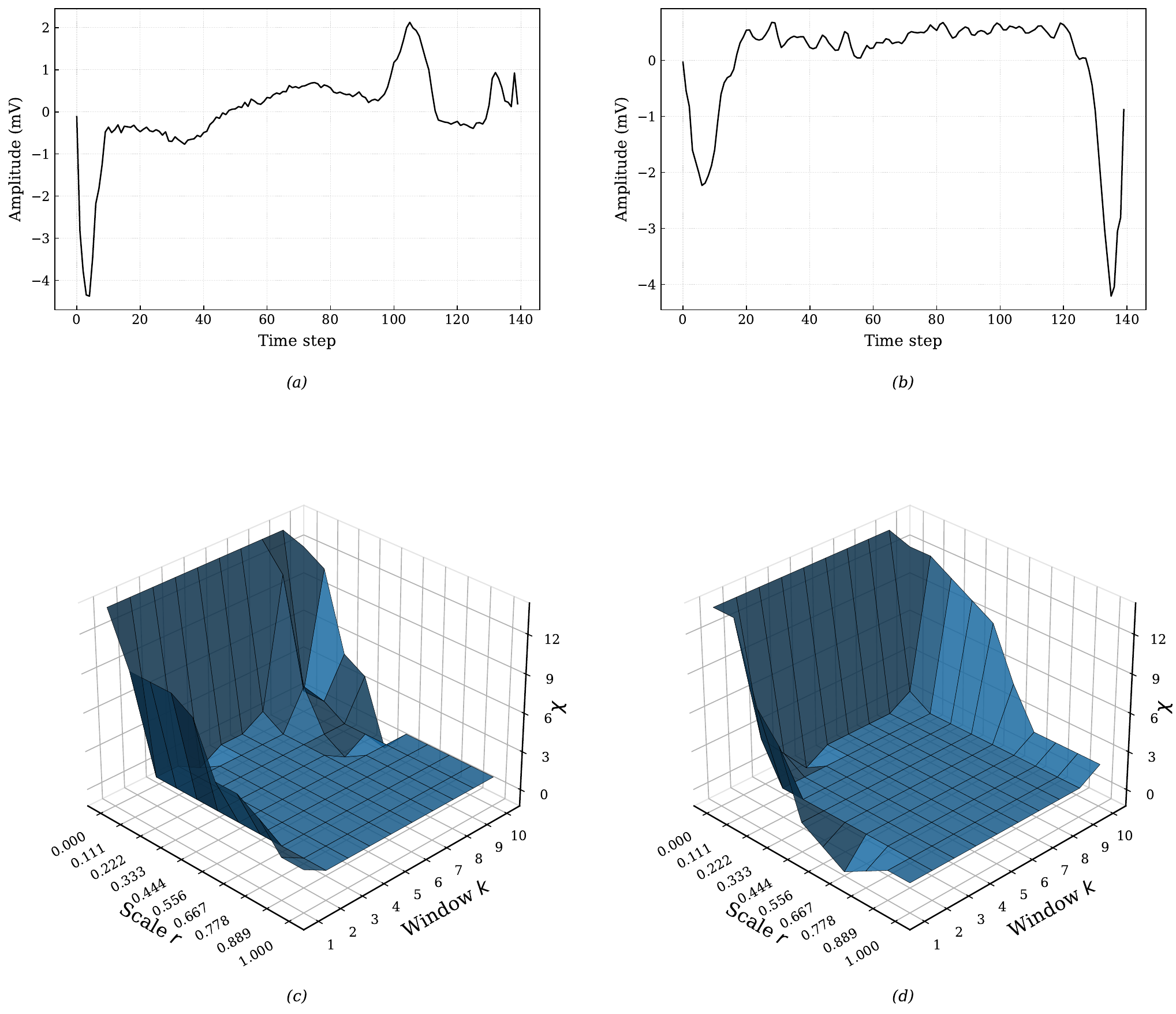}
    \caption{Panels (a) and (b) show representative ECG time series for classes 1
    and 2, respectively. Panels (c) and (d) show the corresponding ECSs.}
    \label{fig:ECS_ecg5000}
\end{figure}

Fig.~\ref{fig:ecs_ecg_heatmap} (a) represents such difference between the ECSs~\ref{fig:ECS_ecg5000}(c) and \ref{fig:ECS_ecg5000}(d). The ECSs have been interpolated to improve visual clarity. We observe a higher difference in the $\chi$ concentrated at window 8 at scale 0.11. This translates to a scale--time coordinate of approximately $(r, t) \approx (0.11, 100)$. Further, to quantify the difference in ECSs of the two classes, we compute the pairwise $L_1$ distance between all training-set ECSs using Eq.~\eqref{eq:euler_metric}. The resulting heatmap (Fig.~\ref{fig:ecs_ecg_heatmap}) confirms that intra-class distances are consistently smaller than inter-class distances. The mean intra-class $L_1$ distances for classes 1 and 2 are $5.58$ and $4.14$, respectively, while the mean inter-class distance is $8.89$, indicating that the ECS encodes topologically meaningful and class-discriminative information.

\begin{figure}[hbt]
    \centering
    \includegraphics[width=.97\linewidth]{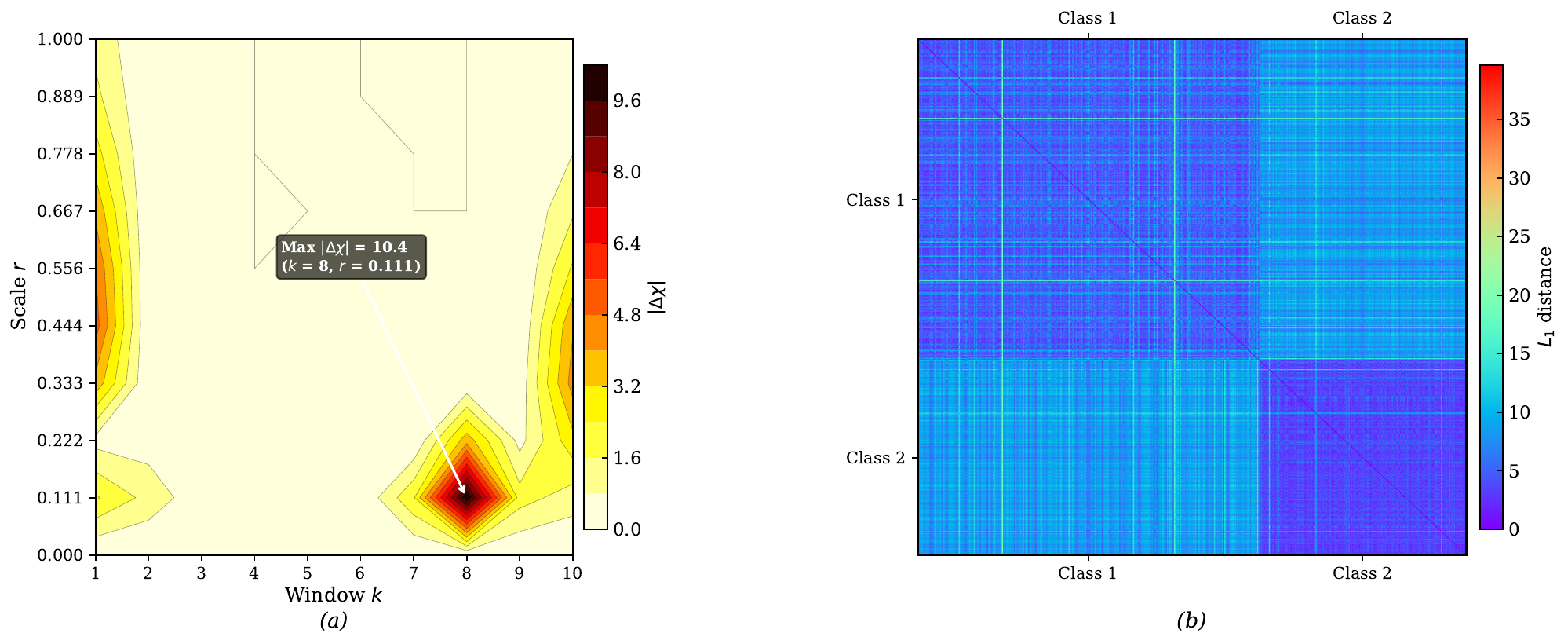}
    \caption{Topological comparison of ECS representations constructed from the \textit{ECG5000} training set. Panel~(a) shows the pointwise absolute difference $|\Delta\chi|$ between the mean ECS of each class, plotted over the scale--window grid; the annotated maximum identifies the $(k, r)$ coordinate at which the two classes are most topologically distinguishable. Panel (b) shows the pairwise $L_1$ distances between the ECSs of all training samples, visualized as a heatmap.}
    \label{fig:ecs_ecg_heatmap}
\end{figure}

\subsubsection*{Single-Feature Classification}

Ranking the $10 \times 10 = 100$ ECS features by their training-set AUC, feature $72$ emerged as the most discriminative, achieving a training AUC of $0.9962$. This feature corresponds to a scale--time coordinate at filtration scale $r \approx 0.11$ and $k=8$. This window contains the points between time step $98$ to $112$ in the Takens-embedded heartbeat signal. This is the same coordinate which showed maximum difference in Euler characteristic $|\Delta\chi|$ in Fig.~\ref{fig:ecs_ecg_heatmap} (a). The class-conditional density of this feature is shown in Fig.~\ref{fig:clsf_ecg_5000}(a). Class~1 exhibits systematically higher $\chi$ values than class~2 at this coordinate. Applying the polarity reversal ($p = -1$) and the Youden-optimal threshold $\theta = 3$, samples with $\chi_{72} > 3$ are assigned to class~1 and those below to class~2.

This single-feature decision rule has a direct physiological interpretation. The discriminative coordinate lies near the terminal portion of the heartbeat waveform, where topological differences in the Takens-embedded point cloud at scale $r \approx 0.11$ reflect structural differences in the cardiac dynamics between the two classes. The test-set ROC curve is shown in Fig.~\ref{fig:clsf_ecg_5000}(b).

\begin{figure}[hbt]
    \centering    \includegraphics[width=0.95\linewidth]{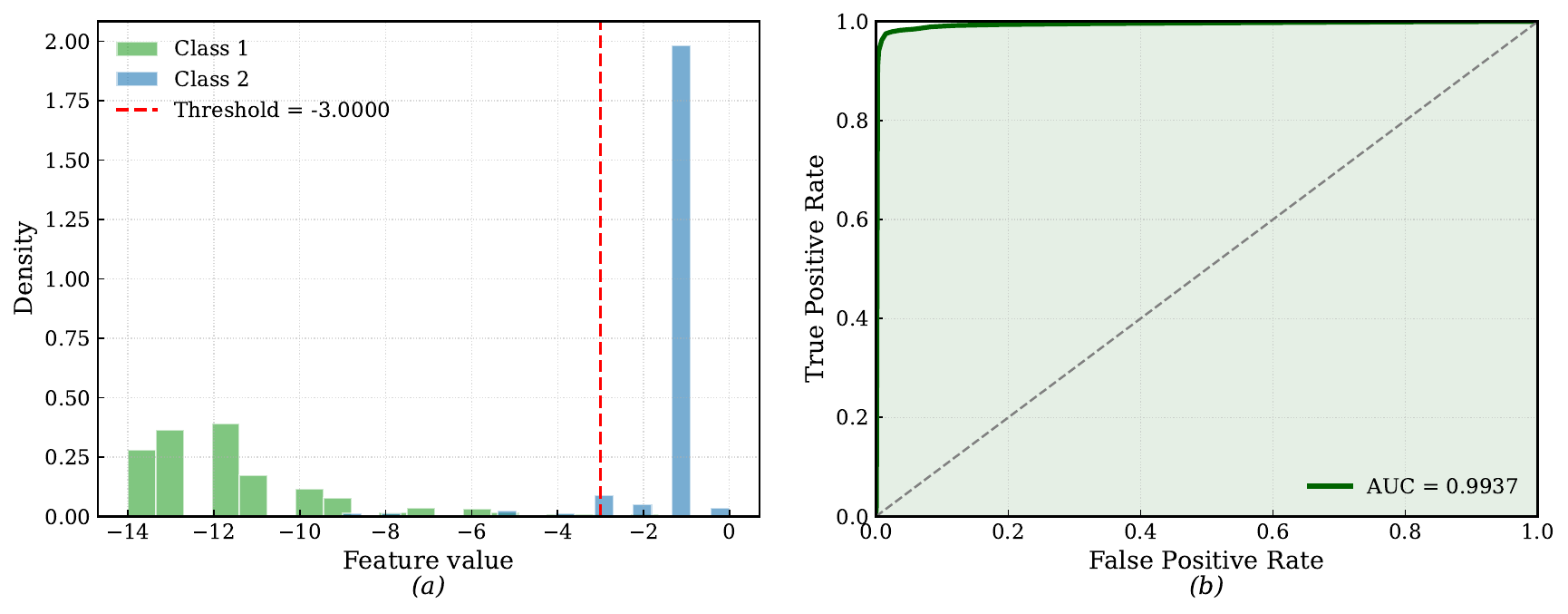}
    \caption{(a)~Training-set density of feature~72 for both classes after polarity reversal ($p=-1$), with the Youden-optimal threshold $\theta = 3$. (b)~Test-set ROC curve obtained with feature~72.}
    \label{fig:clsf_ecg_5000}
\end{figure}

\subsubsection*{AdaBoost Classification}

Cross-validation over the training set identified the $6 \times 6$ grid as optimal
for AdaBoost, yielding $T = 36$ decision stumps. The aggregated $\alpha$ contributions of all stumps over the scale--time grid are shown in Fig.~\ref{fig:adaboost_heatmap}. The $\alpha$ contribution of different features in the classification model is shown in Fig.~\ref{fig:adaboost_heatmap}.
\begin{figure}[tbh]
    \centering
    \includegraphics[width=0.7\linewidth]{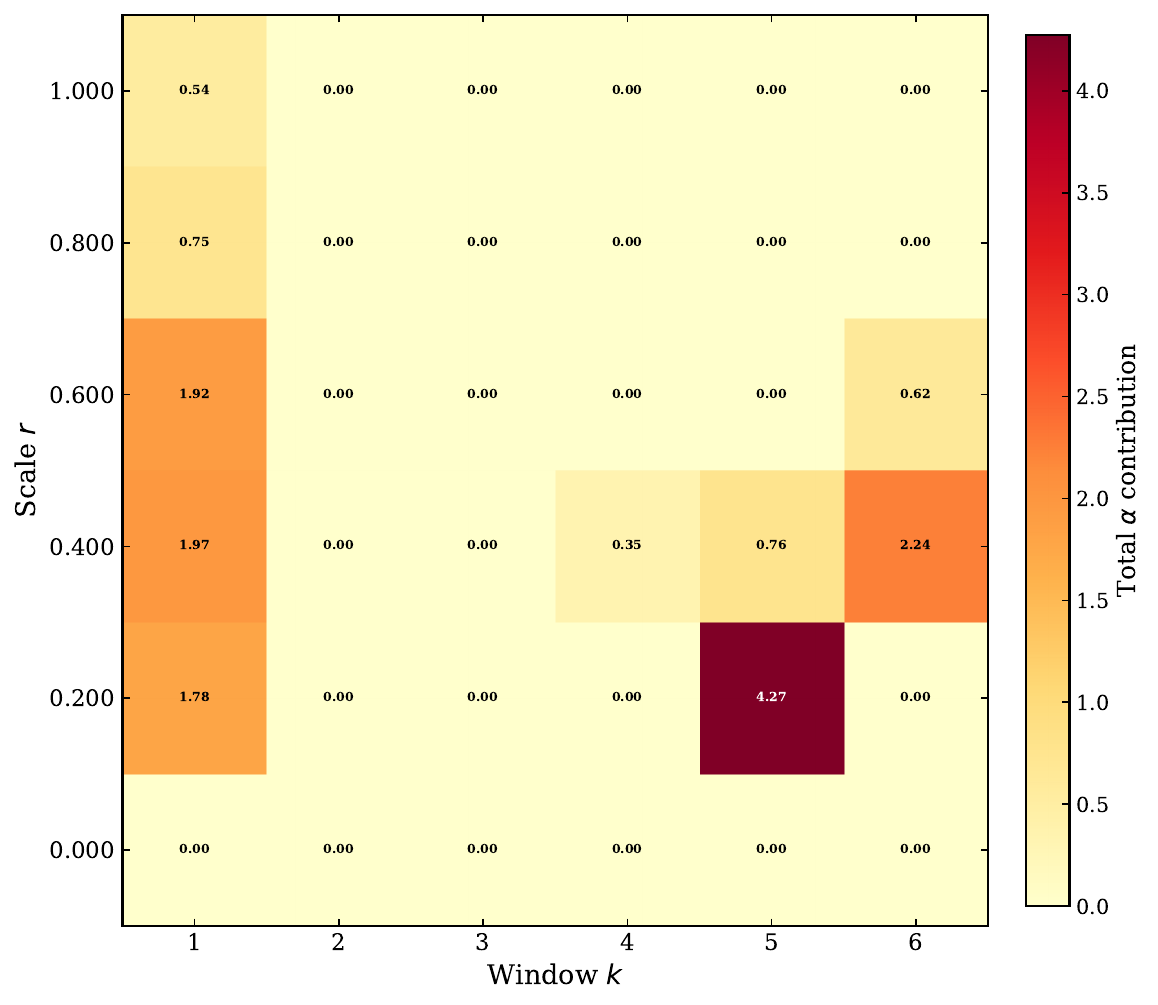}
    \caption{Heatmap of the total $\alpha$ contribution of each ECS feature in the
    AdaBoost ensemble trained on the \textit{ECG5000} dataset ($6 \times 6$ grid,
    $T = 36$ stumps).}
    \label{fig:adaboost_heatmap}
\end{figure}

The dominant feature in the $6 \times 6$ ensemble is feature 26, which corresponds to the scale--time block $(r,\,k) \approx (0.2,\,5)$. This window comprises of the point cloud from the time step $92$ to $115$ in the Takens-embedded heartbeat signal. When the AdaBoost model was computed on the $10 \times 10$ grid ECS, feature $72$ appeared to have the most $\alpha$-contribution. This consistency across grid resolutions confirms the physical robustness of the identified discriminative region; regardless of how finely the ECS is partitioned, the ensemble persistently returns to the same localized neighborhood as the most informative. The significantly higher contribution of a single feature suggests that the classification problem is governed by a highly localized topological event --- a sharp and reproducible change in the connectivity of the Takens point cloud at $r \approx 0.1-0.2$ and $time-step \approx 92-115$.

\subsubsection*{Quantitative Performance Summary}

Table~\ref{tab:ecg5000_results} summarizes the classification metrics for both frameworks on the held-out test set of $4217$ samples. The AdaBoost ensemble improves upon the single-feature baseline across every metric. The AUC rises from $0.994$ to $0.999$, accuracy from $98.0\%$ to $98.6\%$, and the F1 score from $0.974$ to $0.981$. Notably, the number of false negatives is reduced from $60$ to $37$, which is clinically significant in abnormal heartbeat detection, where missed positives carry a higher cost than false alarms.

\begin{table}[h]
\centering
\renewcommand{\arraystretch}{1.4}
\setlength{\tabcolsep}{10pt}
\begin{tabular}{lcc}
\hline\hline
\textbf{Metric} & \textbf{Single Feature} & \textbf{AdaBoost} \\
\hline
ROC-AUC              & 0.9937 & 0.9988 \\
Accuracy             & 0.9803 & 0.9855 \\
Precision            & 0.9852 & 0.9848 \\
Recall (Sensitivity) & 0.9623 & 0.9767 \\
Specificity          & 0.9912 & 0.9909 \\
F1 Score             & 0.9736 & 0.9807 \\
\hline
True Positives  & 1{,}530 & 1{,}553 \\
False Negatives &      60 &      37 \\
False Positives &      23 &      24 \\
True Negatives  & 2{,}604 & 2{,}603 \\
\hline\hline
\end{tabular}
\caption{Classification performance on the \textit{ECG5000} test set (4{,}217 samples).
The single-feature model uses a $10 \times 10$ ECS grid; the AdaBoost model uses a
$6 \times 6$ grid with $T = 36$ stumps.}
\label{tab:ecg5000_results}
\end{table}

\subsubsection*{Comparative Analysis}

Several machine learning and deep learning models have reported accuracies consistently exceeding $95\%$ on this dataset~\cite{roy2023ecg,GENG20258,XING2025111699}. However, the rationale underlying their decisions remains largely opaque, limiting clinical utility. Our framework achieves comparable accuracy while grounding every classification decision in the Euler characteristic $\chi$ --- a fundamental topological invariant --- thereby providing a transparent and physically meaningful explanation.

The closest topological precedent is the work of Ichinomiya~\cite{ichinomiya2025machine}, who constructed recurrence plots from the embedded time series and extracted topological features via persistent homology (PH), vectorized as persistence images (PIs), and further reduced by non-negative matrix factorization (NMF). This pipeline achieved an accuracy of $62\%$ and an AUC of $0.90$ with a single feature. Our method bypasses the recurrence-plot construction, PH computation, PI vectorization, and NMF reduction entirely, computing the ECS directly from the Takens point cloud. Applied to the same dataset, our single-feature scheme achieves an accuracy of $98\%$ and an AUC of $0.994$, substantially surpassing the PH-based approach. The AdaBoost extension pushes these
figures to $98.6\%$ and $0.999$, matching the best deep learning results while retaining full interpretability and incurring substantially lower computational cost.

\subsection*{Classification of the \emph{Epilepsy2} Dataset}
\label{subsec:epilepsy2}

We next classify the \textit{Epilepsy2} (recently renamed as \textit{EpilepticSeizures}) dataset from the UEA/UCR archive~\cite{bagnall2018uea}. The dataset comprises single-channel EEG recordings from $500$ subjects, each capturing $23.6$ seconds of brain activity sampled at $178$ Hz, yielding time series of $178$ time steps per sample. The recordings span five class labels corresponding to different physiological states: eyes closed, eyes open, EEG from a healthy cortical region, EEG from the tumour site, and ictal activity (seizure)~\cite{PhysRevE.64.061907}. For binary classification, the first four classes are merged into a single non-seizure class~\cite{zhang2022self}, reducing the task to discriminating seizure from non-seizure brain activity --- a clinically significant problem in automated epilepsy monitoring. The training set consists of $80$ balanced samples ($40$ seizure, $40$ non-seizure), while the test set of $11{,}420$ samples is substantially imbalanced.

\subsubsection*{ECS for EEG Signals}

The optimal embedding dimension at lag $\tau = 1$ was determined to be $m = 4$ using the false nearest neighbor criterion. The ECS grid size was optimized by 5-fold cross-validation on the training set. The best grid for both single feature and Adaboost scheme was found to be a $9 \times 9$ grid, which was therefore adopted for all experiments.

\begin{figure}[tbh]
    \centering
    \includegraphics[width=0.9\linewidth]{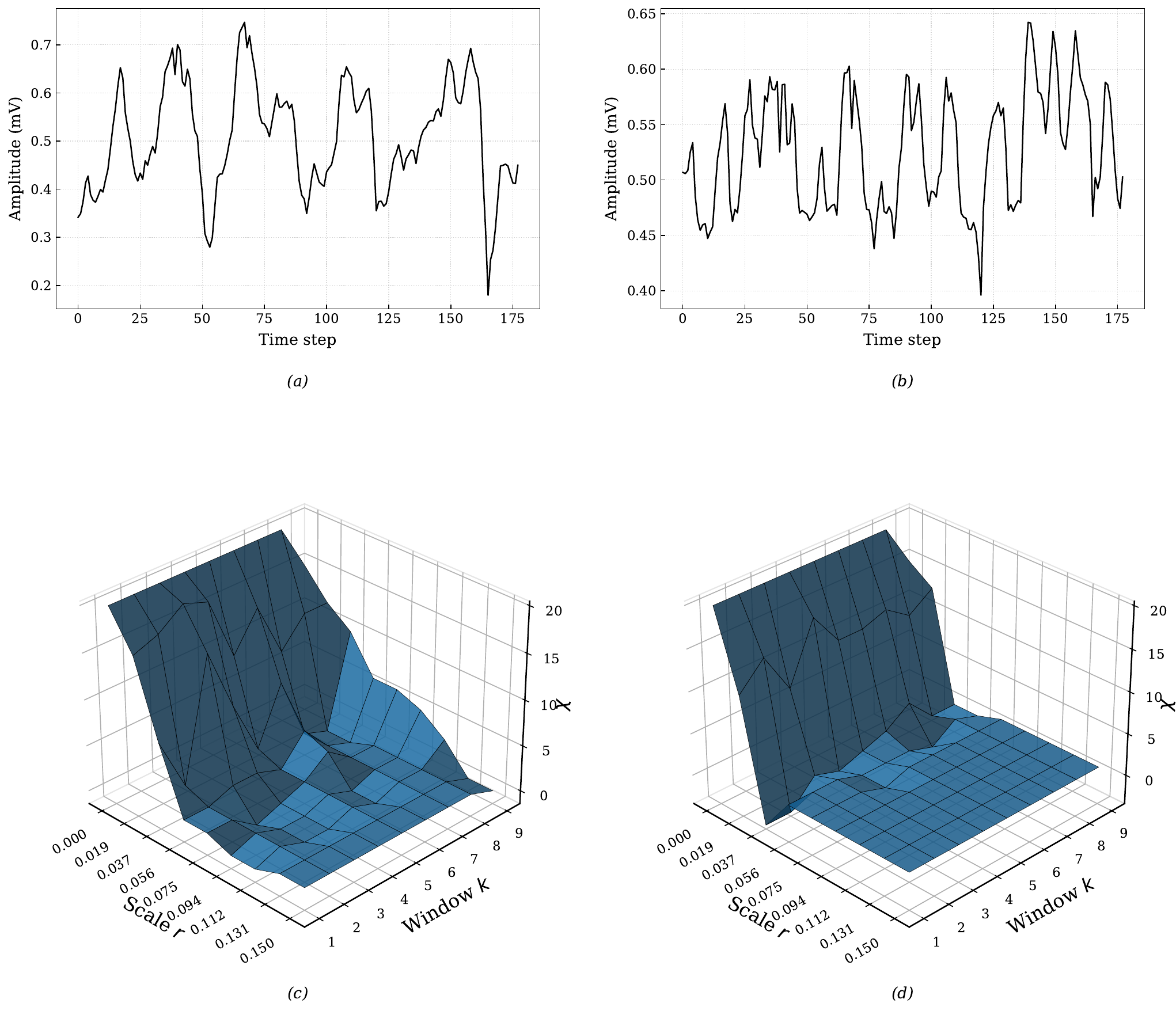}
    \caption{Panels (a) and (b) show representative EEG time series for the seizure
    and non-seizure classes, respectively. Panels (c) and (d) show the corresponding
    ECSs.}
    \label{fig:epilepsy_ecs}
\end{figure}

The ECSs for the two classes are shown in Fig.~\ref{fig:epilepsy_ecs}. The seizure ECS (Fig.~\ref{fig:epilepsy_ecs}(c)) is markedly rougher and more irregular than the smooth, slowly varying non-seizure ECS (Fig.~\ref{fig:epilepsy_ecs}(d)). This contrast reflects the well-known increase in high-frequency neural oscillations during seizure episodes, which manifests topologically as rapid fluctuations in the connectivity of the Takens point cloud across both scale and time. The difference $|\Delta\chi|$ at the particular scale-time coordinate between ECSs Fig.~\ref{fig:epilepsy_ecs}(c) and (d) is shown in  Fig.~\ref{fig:epilepsy_heatmap} (a). We observe the differences in $\chi$ to be spread across the windows in scales upto $0.075$. The pairwise $L_1$ distance heatmap (Fig.~\ref{fig:epilepsy_heatmap} (b)) confirms strong inter-class separation. Intra-class distances within the non-seizure class are very small, indicating high topological regularity, while inter-class distances are substantially larger. The higher intra-class separation for the seizure class may be caused by the inherent spatiotemporal differences in the EEG signal of different seizure episodes~\cite{https://doi.org/10.1002/hbm.25796}.

\begin{figure}[htb]
    \centering
\includegraphics[width=0.95\linewidth]{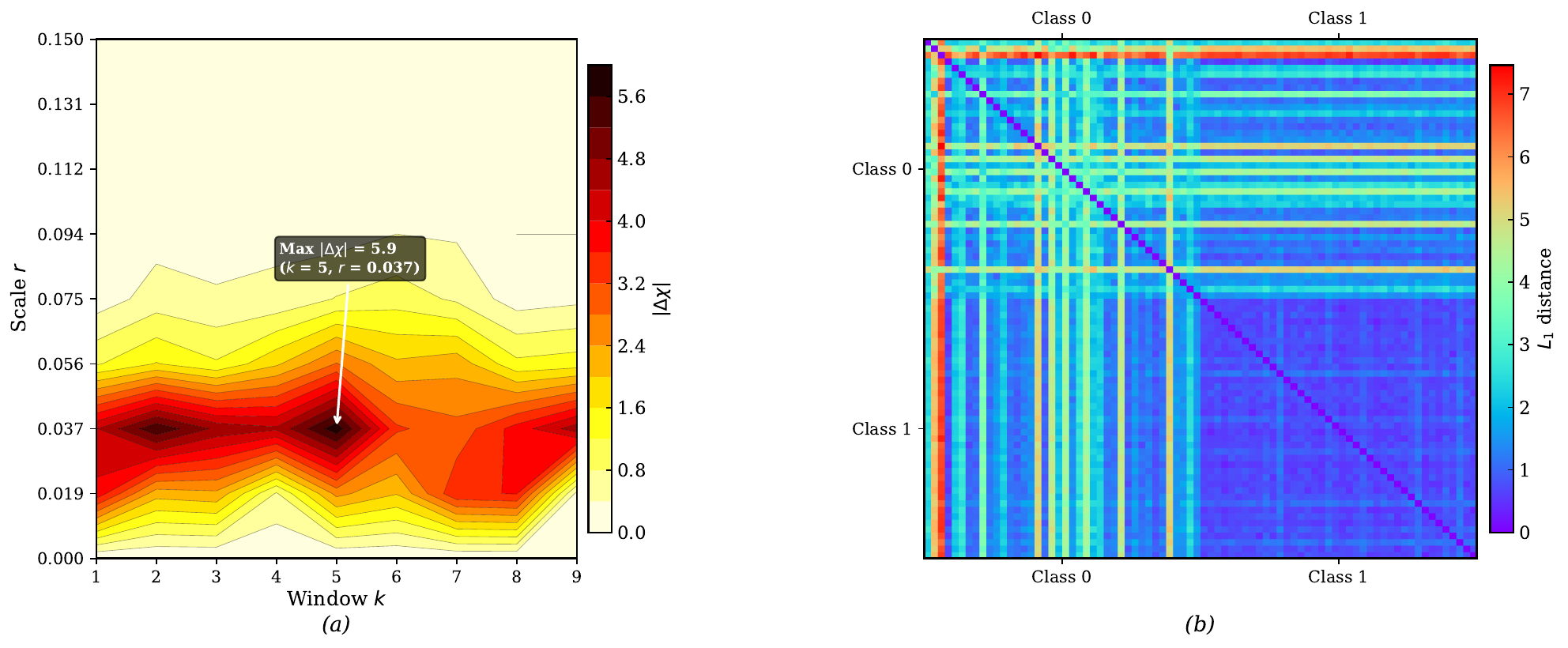}
    \caption{Topological comparison of ECS representations constructed from the \textit{Epilepsy2} training set. Panel~(a) shows the point-wise absolute difference $|\Delta\chi|$ between the mean ECS of each class, plotted over the scale--window grid; the annotated maximum identifies the $(k, r)$ coordinate at which the two classes are most topologically distinguishable. Panel~(b) shows the pairwise $L_1$ distance between the ECSs of all training samples, displayed as a heatmap.}
    \label{fig:epilepsy_heatmap}
\end{figure}

\subsubsection*{Single-Feature and AdaBoost Classification}
The classification results on the held-out test set of $11{,}420$ samples are presented in Table~\ref{tab:epilepsy2_results}. The single-feature classifier achieves an accuracy of $87.6\%$ and an AUC of $0.835$, demonstrating that a single ECS feature can provide meaningful discrimination on a large, imbalanced test set. The precision of $0.940$ is high, but the recall of $0.903$ and the false-positive count of $530$ --- representing seizure episodes misclassified as non-seizure --- reflect the difficulty of capturing all seizure episodes from a single topological coordinate.


The AdaBoost ensemble improves substantially across all metrics. The AUC rises from $0.835$ to $0.963$ --- an improvement of more than twelve percentage points --- as the ensemble draws on multiple complementary regions of the $(K \times R) = (9 \times 9) = 81$-dimensional ECS feature space. Accuracy increases to $92.6\%$, and the F1 score from $0.921$ to $0.953$. Most importantly, the false-positive count decreases from $530$ to $331$, meaning the ensemble correctly recovers $199$ additional seizure episodes that were previously misclassified as non-seizure --- the clinically critical error in seizure detection, where a missed seizure carries far greater cost than a false alarm. The false-negative count also falls from $885$ to $519$, confirming that the reduction in missed seizures is not achieved at the expense of an inflated false alarm rate --- a balanced improvement that follows directly from the AdaBoost reweighting mechanism.

\begin{table}[ht]
\centering
\renewcommand{\arraystretch}{1.4}
\setlength{\tabcolsep}{10pt}
\begin{tabular}{lcc}
\hline\hline
\textbf{Metric} & \textbf{Single Feature} & \textbf{AdaBoost} \\
\hline
ROC-AUC              & 0.8347 & 0.9628 \\
Accuracy             & 0.8761 & 0.9256 \\
Precision            & 0.9398 & 0.9631 \\
Recall (Sensitivity) & 0.9034 & 0.9433 \\
F1 Score             & 0.9212 & 0.9531 \\
\hline
True Positives  & 8{,}275 & 8{,}641 \\
False Negatives &   885   &   519   \\
False Positives &   530   &   331   \\
True Negatives  & 1{,}730 & 1{,}929 \\
\hline\hline
\end{tabular}
\caption{Classification performance on the \textit{Epilepsy2} test set (11{,}420 samples). The ECS was constructed with a $9 \times 9$ grid ($m = 4$, $\tau = 1$) and the AdaBoost ensemble comprised $T = 81$ stumps.}
\label{tab:epilepsy2_results}
\end{table}

The $\alpha$-contribution heatmap for the AdaBoost ensemble is shown in Fig.~\ref{fig:adaboost_epilepsy2}.
\begin{figure}[tbh]
    \centering
\includegraphics[width=0.82\linewidth]{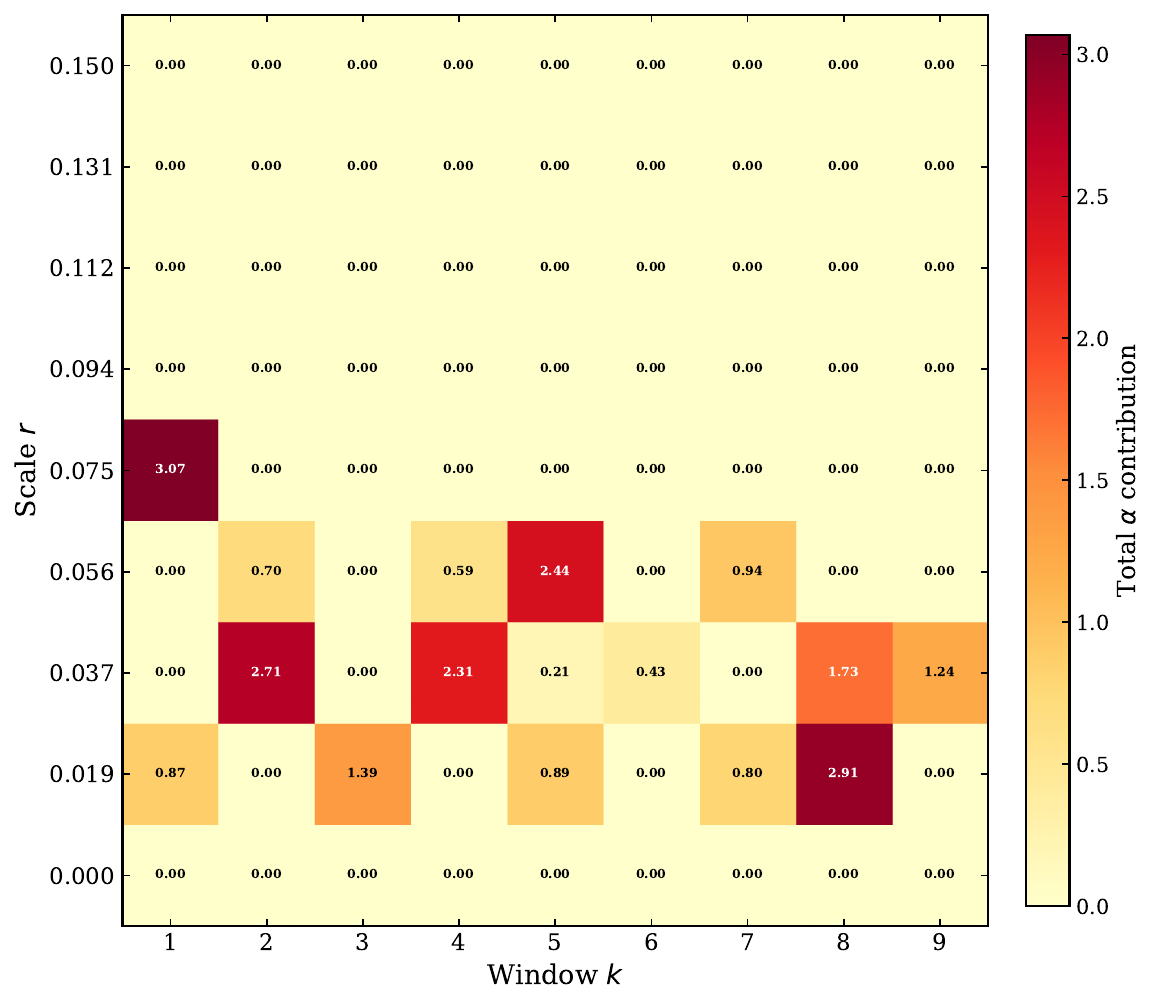}
    \caption{Heatmap of the total $\alpha$ contribution of each ECS feature in the AdaBoost ensemble trained on the \emph{Epilepsy2} dataset ($9 \times 9$ grid, $T = 81$ stumps).}
    \label{fig:adaboost_epilepsy2}
\end{figure}
Unlike the \textit{ECG5000} case, where a single feature dominates, the \textit{Epilepsy2} heatmap reveals multiple features across different temporal blocks contributing substantially to the classification. This demonstrates that when discriminatory information is spread across the spatiotemporal range of the ECS rather than concentrated at a single coordinate, the AdaBoost ensemble effectively combines these individually weak stumps into a strong classifier.

\subsection*{Additional ECG Datasets}
\label{subsec:other_datasets}

We applied the same pipeline to three further ECG datasets from the UCR archive, and the results are presented in Table~\ref{tab:all_datasets_results}.

\begin{table}[ht]
\centering
\renewcommand{\arraystretch}{1.4}
\setlength{\tabcolsep}{5pt}
\begin{tabular}{llccccccc}
\hline\hline
\textbf{Dataset} & \textbf{Method} & \textbf{AUC} & \textbf{Accuracy} &
\textbf{Precision} & \textbf{Recall} & \textbf{Specificity} & \textbf{F1}  \\
\hline
\multirow{2}{*}{\textit{TwoLeadECG}}
& Single Feature & 0.9742 & 0.9263 & 0.8894 & 0.9737 & 0.8787 & 0.9296 \\
& AdaBoost       & 0.9780 & 0.9412 & 0.9736 & 0.9070 & 0.9754 & 0.9391 \\[4pt]
\multirow{2}{*}{\textit{ECG200}}
  & Single Feature & 0.8516 & 0.7800 & 0.9565 & 0.6875 & 0.9444 & 0.8000 \\
  & AdaBoost       & 0.8811 & 0.8000 & 0.8143 & 0.8906 & 0.6389 & 0.8507 \\[4pt]
\multirow{2}{*}{\textit{ECGFiveDays}}
& Single Feature & 0.7949 & 0.7851 & 0.7431 & 0.8753 & 0.6939 & 0.8038 \\
& AdaBoost       & 0.8494 & 0.7933 & 0.7904 & 0.8014 & 0.7850 & 0.7959 \\
\hline\hline
\end{tabular}
\caption{Classification performance across three additional ECG datasets from the UCR archive. All metrics are reported on the held-out test set. The optimal ECS grid size was determined independently for each dataset by 5-fold cross-validation.}
\label{tab:all_datasets_results}
\end{table}


The \textit{TwoLeadECG} dataset yields strong performance under both frameworks. The single-feature classifier achieves an AUC of $0.974$ and an accuracy of $92.6\%$, with a notably high recall of $0.974$ at the cost of a modest precision of $0.889$, indicating that the single feature captures nearly all true positives but incurs a non-trivial false-positive rate. AdaBoost corrects this asymmetry. Precision rises sharply to $0.974$ and specificity from $0.879$ to $0.975$, while recall decreases from $0.974$ to $0.907$ --- a classic sign of boosting overcompensating for the initial false-positive surplus by shifting the effective decision boundary toward greater conservatism. The net effect is a more balanced classifier, with accuracy improving from $92.6\%$ to $94.1\%$ and the F1 score remaining stable at $0.930$ vs. $0.939$, confirming that the dataset is well-separated in the ECS feature space and that the
dominant topological discriminant is already captured by a single feature.

The \textit{ECG200} dataset is a moderately harder problem. The single-feature classifier achieves a high precision of $0.957$ and specificity of $0.944$, but at the cost of a lower recall of $0.688$, yielding an F1 score of $0.800$. This asymmetry indicates that while the selected feature rarely misidentifies a negative sample, it misses a non-trivial fraction of true positives, likely because the minority class exhibits greater topological variability than any single coordinate can capture. AdaBoost addresses this directly: by reweighting misclassified positive instances at each boosting round, the ensemble achieves a recall of $0.891$ at a precision of $0.814$ and a substantially more symmetric confusion matrix, with the AUC improving from $0.852$ to $0.881$.


The \textit{ECGFiveDays} dataset exposes the boundary conditions of our framework. The single-feature classifier achieves a recall of $0.875$ but a specificity of only $0.694$, indicating that while it captures the majority of true positives, it does so at the cost of a meaningful false-positive rate. The AUC of $0.795$ and F1 score of $0.804$ reflect a classifier that is functional but asymmetric, with the Youden-optimal threshold on the small training set placing the decision boundary at a position that does not generalize uniformly across both classes. AdaBoost substantially recovers from this imbalance; by iteratively reweighting misclassified instances, the ensemble raises specificity from $0.694$ to $0.785$ and AUC from $0.795$ to $0.849$, while recall decreases modestly from $0.875$ to $0.801$ --- again a signature of boosting correcting an initial false-positive surplus by shifting the effective decision boundary. The result is a markedly more symmetric classifier, with precision rising from $0.743$ to $0.790$ and the F1 score remaining stable at $0.804$ vs.\ $0.796$. The comparatively modest absolute performance on this dataset likely reflects the fact that the topological differences between classes are distributed across many scale--time regions of the ECS rather than concentrated at a single discriminative coordinate --- a regime in which ensemble methods hold a structural advantage over any single-feature approach.

Taken together, the results across all five datasets establish two consistent trends. First, the single-feature ECS classifier is a competitive and interpretable baseline whenever the two classes differ sharply at a localized scale--time coordinate, as is the case for \textit{ECG5000} and \textit{TwoLeadECG}. Second, the AdaBoost ensemble provides a robust improvement in every case, most critically when the single-feature baseline produces asymmetric or degenerate classification, as observed for \textit{ECG200}, \textit{ECGFiveDays}, and \textit{Epilepsy2}. The ensemble achieves this by systematically redistributing classifier attention across the full $(K \times R)$-dimensional ECS feature space, recovering topological discriminants that are individually weak but collectively powerful. This complementarity between the interpretable single-stump baseline and the robust AdaBoost ensemble constitutes the principal methodological contribution of our framework.

\section*{Conclusion}
\label{sec:conclusion}

We have presented a computationally efficient and interpretable topological framework for binary time series classification using Euler Characteristic Surfaces (ECS). The framework reconstructs the phase space of a scalar time series through Takens delay embedding and applies the $K$-window method --- introduced in this work --- to partition the embedded point cloud into temporal blocks, from which the ECS is computed as a spatiotemporal matrix tracking the Euler characteristic $\chi$ across both filtration scale and time.

Classification is performed at two levels: a single-feature threshold classifier that selects the most discriminative ECS coordinate via AUC ranking and a Youden-optimal threshold, and an AdaBoost ensemble that combines $K \times R$ decision stumps across the full ECS feature space. In both cases, every classification decision is directly traceable to a specific scale--time coordinate, providing a level of interpretability that is absent from deep learning and most machine learning approaches. We have further proved a stability theorem (Theorem~\ref{TemporalStabilityECS}) guaranteeing that the ECS remains bounded under small perturbations of the input time series, lending theoretical support to the robustness observed empirically.

Compared to persistent homology (PH)-based frameworks, the ECS offers three concrete advantages: it is computationally cheaper ($O(n + R \cdot T)$ vs.\ $O(n^{\omega})$), it is natively discretized and can serve as a direct feature vector without vectorization, and it encodes topological information along both spatial and temporal axes simultaneously. These advantages are borne out by the experiments: on \textit{ECG5000}, our framework achieves $98\%$ accuracy and an AUC of $0.937$ with a single feature, substantially outperforming a recent PH-based approach ($62\%$, AUC $0.90$)~\cite{ichinomiya2025machine}. AdaBoost improves the accuracy to $98.6\%$ and AUC to $0.999$, matching the best deep learning results while retaining full interpretability. Strong performance is also observed on \textit{TwoLeadECG} ($94.1\%$) and the EEG-based \textit{Epilepsy2} seizure detection task ($92.6\%$), demonstrating generality across signal types.

Several directions remain open for future work. The maximum scale of the ECS is chosen per dataset by inspecting the inter-point distances of the Takens-embedded point cloud. The grid parameters $K$ and $R$ are currently chosen by cross-validation. These could be optimized by a principled criterion;  perhaps grounded in the intrinsic dimensionality of the point cloud or the autocorrelation structure of the time series. The stability theorem is stated for embedding dimension $m = 3$; extending it to general $m$ is a natural theoretical goal. Extensions to multivariate time series, longer recordings, and multiclass settings (where the single-threshold framework must be replaced by a multiclass strategy) are also important next steps. Beyond biomedical signals, we anticipate that the spatiotemporal topological encoding provided by the ECS will find applications in other domains where interpretable time series classification is important, including finance, climate science, and fault detection in industrial machinery.
\bibliographystyle{unsrt}
\bibliography{sample}

\section*{Acknowledgements}
We acknowledge NIT Sikkim for allocating doctoral fellowships to B.N.S., V.M., and S.R.L.
\section*{Author contributions statement}

B.N.S. developed the computational framework and conducted the experiments. V.M. and S.R.L. contributed to data analysis. M.N. supervised the experimental work. A.J.M. formulated and proved the stability results. S.M. conceived the project and supervised the overall research direction. All authors reviewed the manuscript.

\section*{Competing interests}
The authors declare no competing interests.

\end{document}